\def\BibTeX{{\rm B\kern-.05em{\sc i\kern-.025em b}\kern-.08em
    T\kern-.1667em\lower.7ex\hbox{E}\kern-.125emX}}
\newtheorem{theorem}{Theorem}
\newtheorem{lemma}{Lemma}
\begin{document}

\title{Preserving Privacy and Security in \\Federated Learning}
% for single author (just remove % characters)
\author{Truc~Nguyen
        and~My~T.~Thai% <-this % stops a space
\IEEEcompsocitemizethanks{\IEEEcompsocthanksitem T. Nguyen and My T. Thai are with the Department of Computer \& Information Science \& Engineering, University of Florida, Gainesville,
FL, 32611.\protect\\
% note need leading \protect in front of \\ to get a newline within \thanks as
% \\ is fragile and will error, could use \hfil\break instead.
E-mail: truc.nguyen@ufl.edu and mythai@cise.ufl.edu
% \IEEEcompsocthanksitem J. Doe and J. Doe are with Anonymous University.
}% <-this % stops an unwanted space
% \thanks{Manuscript received April 19, 2005; revised August 26, 2015.}
}

% The paper headers
\markboth{Journal of \LaTeX\ Class Files,~Vol.~14, No.~8, August~2015}%
{Shell \MakeLowercase{\textit{et al.}}: Bare Demo of IEEEtran.cls for Computer Society Journals}

% %for single author (just remove % characters)
% \author{
% {\rm Your N.\ Here}\\
% Your Institution
% \and
% {\rm Second Name}\\
% Second Institution
% % copy the following lines to add more authors
% % \and
% % {\rm Name}\\
% %Name Institution
% } % end author

% \author{\IEEEauthorblockN{1\textsuperscript{st} Given Name Surname}
% \IEEEauthorblockA{\textit{dept. name of organization (of Aff.)} \\
% email address or ORCID}
% \and
% \IEEEauthorblockN{2\textsuperscript{nd} Given Name Surname}
% \IEEEauthorblockA{\textit{dept. name of organization (of Aff.)} \\
% email address or ORCID}
% \and
% \IEEEauthorblockN{3\textsuperscript{rd} Given Name Surname}
% \IEEEauthorblockA{\textit{dept. name of organization (of Aff.)} \\
% \textit{name of organization (of Aff.)}\\
% City, Country \\
% email address or ORCID}
% \and
% \IEEEauthorblockN{4\textsuperscript{th} Given Name Surname}
% \IEEEauthorblockA{\textit{dept. name of organization (of Aff.)} \\
% \textit{name of organization (of Aff.)}\\
% City, Country \\
% email address or ORCID}
% \and
% \IEEEauthorblockN{5\textsuperscript{th} Given Name Surname}
% \IEEEauthorblockA{\textit{dept. name of organization (of Aff.)} \\
% \textit{name of organization (of Aff.)}\\
% City, Country \\
% email address or ORCID}
% \and
% \IEEEauthorblockN{6\textsuperscript{th} Given Name Surname}
% \IEEEauthorblockA{\textit{dept. name of organization (of Aff.)} \\
% \textit{name of organization (of Aff.)}\\
% City, Country \\
% email address or ORCID}
% }

\IEEEtitleabstractindextext{%
\begin{abstract}
Federated learning is known to be vulnerable to both security and privacy issues. Existing research has focused either on preventing poisoning attacks from users or on concealing the local model updates from the server, but not both. However, integrating these two lines of research remains a crucial challenge since they often conflict with one another with respect to the threat model. 

In this work, we develop a principle framework that offers both privacy guarantees for users and detection against poisoning attacks from them. With a new threat model that includes both an honest-but-curious server and malicious users, we first propose a secure aggregation protocol using homomorphic encryption for the server to combine local model updates in a private manner. Then, a zero-knowledge proof protocol is leveraged to shift the task of detecting attacks in the local models from the server to the users. The key observation here is that the server no longer needs access to the local models for attack detection. Therefore, our framework enables the central server to identify poisoned model updates without violating the privacy guarantees of secure aggregation. %Finally, we analyze the computation and communication complexity of our proposed solution and benchmark its performance.

\begin{IEEEkeywords}
Federated learning, zero-knowledge proof, homomorphic encryption, model poisoning
\end{IEEEkeywords}
\end{abstract}
}

\maketitle

\IEEEdisplaynontitleabstractindextext
\newcommand{\calU}{{\cal U}}
\newcommand{\calE}{{E}}
\newcommand{\calD}{{\cal D}}

\IEEEpeerreviewmaketitle

\IEEEraisesectionheading{\section{Introduction}\label{sec:introduction}}

%Federated learning is an engaging framework for a large-scale distributed training of deep learning models with thousands to millions of users. In every round, a central server distributes the current global model to a random subset of users. Each of the users trains locally and submits a model update to the server. Then, the server averages the updates into a new global model. %Federated learning has inspired many applications in many domains, especially training image classifiers and next-word predictors on users’ smartphones \cite{hard2018federated}. To exploit a wide range of non-i.i.d. training data while maintaining users’ privacy, federated learning by design has no visibility into users’ local data and training.

Federated learning is an engaging framework for large-scale distributed training of deep learning models with thousands to millions of users. In every round, a central server distributes the current global model to a random subset of users. Each of the users trains locally and submits a model update to the server. Then, the server averages the updates into a new global model. Federated learning has inspired many applications in many domains, especially training image classifiers and next-word predictors on users’ smartphones \cite{hard2018federated}. To exploit a wide range of training data while maintaining users’ privacy, federated learning by design has no visibility into users’ local data and training.

Despite the great potential of federated learning in large-scale distributed training with thousands to millions of users, the current system is still vulnerable to certain privacy and security risks. First, although the training data of each user is not disclosed to the server, the model update is. This poses a privacy threat, as it is suggested that a trained neural network’s parameters enable a reconstruction and/or inference of the original training data \cite{nguyen2023active,fredrikson2015model,abadi2016deep}. Second, federated learning is generally vulnerable to model poisoning attacks that leverage the fact that federated learning enables adversarial users to directly influence the global model, thereby allowing considerably more powerful attacks \cite{bagdasaryan2020backdoor}. Recent research on federated learning focuses on either improving the privacy of model updates \cite{bonawitz2017practical,aono2017privacy,zhang2020batchcrypt} or preventing certain poisoning attacks from malicious users \cite{bagdasaryan2020backdoor,wang2020attack,xie2019dba}, but not both.

To preserve user privacy, secure aggregation protocols have been introduced into federated learning to devise a training framework that protects the local model updates \cite{bonawitz2017practical,aono2017privacy,zhang2020batchcrypt}. These protocols enable the server to privately combine the local models in order to update the global model without learning any information about each individual local model. Specifically, the server can compute the sum of the parameters of the local models while not having access to the local models themselves. As a result, the local model updates are concealed from the server, thereby preventing the server from exploiting the updates of any user to infer their private training data. %Training models in this manner provides substantial benefits, that is, a user's device can share a local model update knowing that the server will only see that update after it has been averaged with those of other users. %However, previous work on this subject provides very limited protection against malicious users. For example, in \cite{bonawitz2017practical}, one malicious user can break the protocol by not honestly following the protocol; in \cite{aono2017privacy}, the protocol is not secure when one user colludes with the server. %This work proposes a secure aggregation protocol that is robust against malicious users.

On the other hand, previous studies also unveil a vulnerability of federated learning when it comes to certain adversarial attacks from malicious users, especially poisoning attacks \cite{bagdasaryan2020backdoor,wang2020attack}. This vulnerability leverages the fact that federated learning gives users the freedom to train their local models. Specifically, any user can train in any way that benefits the attack, such as arbitrarily modifying the weights of their local models. To prevent such attacks, a conventional approach is to have the central server run some defense mechanisms to inspect each of the model updates, such as using anomaly detection algorithms to filter out the poisoned ones \cite{bagdasaryan2020backdoor}.

However, combining these two lines of research on privacy and security is not trivial since they contradict one another. In particular, by allowing the server to inspect local model updates from users to filter out the attacked ones, it violates the security model of secure aggregation. In fact, under a secure aggregation setting, the server should not be able to learn any information about individual model updates. Therefore, with secure aggregation, the server cannot run any defense mechanism directly on each model update to deduce whether it is attack-free or not. For that reason, existing FL systems with secure aggregation prevent the server from detecting poisoned local models.

To tackle this issue, in this paper, we propose a framework that integrates secure aggregation with defense mechanisms against poisoning attacks from users. To circumvent the aforementioned problem, we shift the task of running the defense mechanism onto the users, and then each user is given the ability to attest the execution of the defense mechanism with respect to their model update. Simply speaking, the users are the ones who run the defense mechanism, then they have to prove to the server that the mechanism was executed correctly. To achieve this, the framework leverages a zero-knowledge proof (ZKP) protocol that the users can use to prove the correctness of the execution of a defense mechanism. The ZKP protocol must also make it difficult to generate a valid proof if the defense mechanism was not run correctly, and the proof must not reveal any information about the local model update in order the retain the privacy guarantee of secure aggregation. Via our ZKP protocol, the server can verify if a defense mechanism was properly executed by the users.

To demonstrate the feasibility of the framework, we use the backdoor attack as a use case in which we construct a ZKP protocol for a specific backdoor defense mechanism \cite{liu2018fine}. Furthermore, we also propose a secure aggregation protocol in federated learning to address the limitations of previous work. Specifically, we show that our proposed aggregation protocol is robust against malicious users in a way that it can still maintain privacy and liveness despite the fact that some users may not follow the protocol honestly. Our framework can then combine both the ZKP and the secure aggregation protocols to tackle the above-mentioned privacy and security risks in federated learning.

\noindent\textbf{Contributions.} Our main contributions are as follows:
\begin{itemize}
    \item We establish a framework integrating both the secure aggregation protocol and defense mechanisms against poisoning attacks without violating any privacy guarantees.
    \item We propose a new secure aggregation protocol using homomorphic encryption for federated learning that can tolerate malicious users and a semi-honest server while maintaining both privacy and liveness.
    \item We construct a ZKP protocol for a backdoor defense to demonstrate the feasibility of our framework.
    \item Finally, we analyze the computation and communication cost of the framework and provide some benchmarks regarding its performance.
\end{itemize}

\noindent\textbf{Organization.} The rest of the paper is structured as follows. Section \ref{sec:sec} provides the system and security models. The main framework for combining both the secure aggregation protocol and defense mechanisms against poisoning attacks is shown in Section \ref{sec:frame}. Section \ref{sec:agg} presents our proposed secure aggregation protocol for federated learning. Section \ref{sec:zkp} gives the ZKP protocol for a backdoor defense mechanism. We evaluate the complexity and benchmark our solution in Section \ref{sec:eval}. We discuss some related work in Section \ref{sec:rel} and provide some concluding remarks in Section \ref{sec:con}.

\section{System and Security model} \label{sec:sec}
Current systems proposed to address either the poisoning attacks from users or the privacy of model updates are designed using security models that conflict with each other. In fact, when preserving the privacy of model updates, the central server and the users are all considered as honest-but-curious (or semi-honest). However, when defending against poisoning attacks from users, the server is considered honest while the users are malicious. %As a result, it is not trivial to simultaneously 

Due to that conflict, this section establishes a more general security model combining the ones proposed in previous work on the security and privacy of federated learning. From the security model, we also define some design goals.

\begin{figure*}
    \centering
    \includegraphics[width=0.9\linewidth]{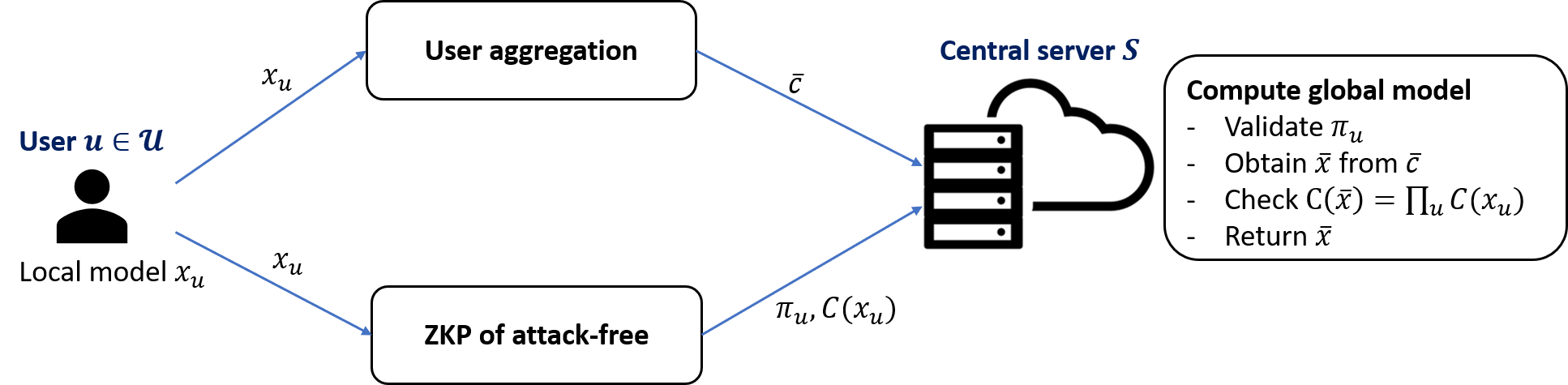}
    \caption{Secure framework for FL. Each user $u\in \calU$ trains a local model $x_u$ that is used as an input to (1) \textbf{User aggregation} and (2) \textbf{ZKP of attack-free model}. The \textbf{User aggregation} component returns $\Bar{c} = E_{pk}(\sum_{u\in \calU'} x_u)$ which is the encryption of the sum over the local models of honest users. The \textbf{ZKP of attack-free model} component returns the proof $\pi_u$ and the commitment $C(x_u)$ for each $x_u$. The outputs of these two component are then used by the central server as inputs to the \textbf{Computing global model} component. This component validates $\{\pi_u\}_{u\in\calU}$, obtains the global model $\Bar{x}$ from $\Bar{c}$, checks if $C(\Bar{x})$ is consistent with $\{C(x_u)\}_{u\in\calU}$, and then returns $\Bar{x}$.}
    \label{fig:framework}
\end{figure*}

\subsection{System model}
We define two entities in our protocol: 

\begin{enumerate}
    \item A set of $n$ users $\mathcal{U}$: Each user has a local training dataset, computes a model update based on this dataset, and collaborates with each other to securely aggregate the model updates.
    \item A single server $S$: Receives the aggregated model updates from the users and computes a global model that can later be downloaded by the users.
    % \item A blockchain smart contract $SC$: validates model updates in zero-knowledge, and records malicious behavior
\end{enumerate}

We denote the model update of each user $u\in \mathcal{U}$ as an input vector $x_u$ of dimension $m$. For simplicity, all values in both $x_u$ and $\sum_{u\in \mathcal{U}} x_u$ are assumed to be integers in the range $[0, N)$ for some known $N$. The protocol is correct if $S$ can learn $\Bar{x} = \sum_{u\in \calU'} x_u$ for some subset of users $\calU' \subseteq \calU$. We assume a federated learning process that follows the FedAvg framework \cite{mcmahan2017communication} which uses a horizontal setting for FL. We denote $\calU'$ as a random subset of  $\calU$ (the server selects a random subset of users for each training round). %Security requires that (1) $S$ only learns the sum of the users’ model updates including contributions from a large portion of users, (2) $S$ can verify that $\Bar{x}$ is the sum of clean model updates, and (3) each user $u \in \calU$ learns nothing.

% The server creates a public/private key pair $(pk,sk)$ and publish $pk$ while keeping $sk$ private. All entities have access to $pk$ but only the Server has access to $sk$. The users jointly compute $\Bar{c} = \calE_{pk}(\Bar{x})$ via a \textit{distributed secure aggregation} scheme and send $\Bar{c}$ to the server. The server then decrypts $\Bar{x} = \calD_{sk}(\Bar{c})$ to obtain the sum $\Bar{x}$.

In addition to securely computing $\Bar{x}$, each user $u\in \mathcal{U}$ computes a zero-knowledge proof $\pi_u$ proving that $x_u$ is a clean model and not leaking any information about $x_u$. This proof $\pi_u$ is then sent to and validated by the server $S$. The server then verifies that $\{\pi_u\}_{u \in U}$ is consistent with the obtained $\Bar{x}$.

\subsection{Security model}

\noindent\textbf{Threat model.} We take into account a computationally bounded adversary that can corrupt the server or a subset of users in the following manners. The server is considered honest-but-curious in a way that it behaves honestly according to the training protocol of federated learning but also attempts to learn as much as possible from the data received from the users.
The server can also collude with corrupted users in $\calU$ to learn the inputs of the remaining users. Moreover, some users try to inject unclean model updates to poison the global model and they can also arbitrarily deviate from the protocol.
%  Moreover, some users try to inject unclean model updates to poison the global model and they can also arbitrarily deviate from the protocol.

As regards the security of the key generation and distribution processes, there has been extensive research in constructing multi-party computation (MPC) protocols where a public key is computed collectively and every user computes private keys \cite{damgaard2001practical,nishide2010distributed,das2022practical}. Such a protocol can be used in this architecture to securely generate necessary keys for the users and the server. Hence, this paper does not focus on securing the key generation and distribution processes but instead assumes that the keys are created securely and honest users' keys are not leaked to others.

%\mt{can we have blockchain, instead of trusting a trusted dealer like this?} \truc{Truc: the key distribution can't be done with blockchain, but I have rewritten in a way that does not mention trusted dealer at all}

% The security of blockchain with regard to tamperproofing and tolerating Byzantine attacks has been extensively studied in literature \cite{heilman2015eclipse,eyal2014majority}. Hence, this paper does not focus on securing blockchain but instead assuming a robust blockchain system in which the smart contract is executed honestly and the adversary cannot tamper with the blockchain records. 

\noindent\textbf{Design goals.} Besides the security aspects, our design also aims to address unique challenges in federated learning, including operating on high-dimensional vectors and users dropping out. With respect to the threat model, we summarize the design goals as follows:

\begin{enumerate}
    \item The server learns nothing except what can be inferred from $\Bar{x}$. 
    \item Each user does not learn anything about the others' inputs.
    % \item Any attempt to conduct an attack will be recorded by the smart contract.
    \item The server can verify that $\Bar{x}$ is the sum of clean models.
    \item The protocol operates efficiently on high-dimensional vectors
    \item  Robust to users dropping out.
    \item Maintain undisrupted service (i.e., liveness) in case a small subset of users deviate from the protocol.
\end{enumerate}

%\mt{The writing is quite lengthy. Lost interest easily... should be more precise, clear, and focused}

%\mt{Should give an overview system with clear 3 components with a good figure, before discussing each component in its own section}

\section{Secure Framework for Federated Learning} \label{sec:frame}
To address the new security model proposed in Section \ref{sec:sec}, we propose a framework to integrate secure aggregation with defenses against poisoning attacks in a way that they can maintain their respective security and privacy properties. Our framework is designed to ensure the privacy of users' local model while preventing certain attacks from users. As shown in Fig. \ref{fig:framework}, our framework consists of three components: (1) users aggregation, (2) zero-knowledge proof of attack-free model, and (3) computing global model. We discuss the properties of each component as follows.

\subsection{Users aggregation}
This component securely computes the sum of local models from the users. First, each user $u \in \calU$ trains a local model update $x_u$ and uses it as input to this component. Then, the component outputs $\Bar{c} = E_{pk}(\sum_{u\in \calU'} x_u)$ which is an encryption over the sum of the model updates of all honest users $\calU' \subseteq \calU$. $E_{pk}(\cdot)$ denotes a viable additive homomorphic encryption scheme, and we require that it must have the property of indistinguishability under chosen plaintext attack (IND-CPA). The encryption function will be discussed in detail in Section \ref{sec:agg}.

Furthermore, this aggregation must maintain the privacy with respect to each local model $x_u$, particularly, we should be able to simulate the output of each user by a Probabilistic Polynomial Time (PPT) simulator. We specify the security properties of the Users aggregation component in Fig. \ref{fig:ua}.

\begin{figure}[h]
	\noindent\fbox{%
		\parbox{\columnwidth}{%
			\begin{center}
				User aggregation
			\end{center}
			
			\noindent \textbf{Inputs:} 
            \begin{itemize}
                \item Local model $x_u$ of each user $u\in \calU$
                \item Public key $pk$ 
            \end{itemize}
            \noindent \textbf{Output:} $\Bar{c} = E_{pk}(\sum_{u\in \calU'} x_u)$ where $\calU' \subseteq \calU$ is the set of honest users \\
            \noindent \textbf{Properties:}
            \begin{itemize}
                \item The encryption algorithm $E(\cdot)$ is additively homomorphic and IND-CPA secure.
                \item $x_u$ is not revealed to other users $\calU \setminus \{u\}$ nor the server $S$. More formally, there exists a PPT simulator that can simulate each $u\in \calU$ such that it can generate a view for the adversary that is computationally indistinguishable from the adversary's view when interacting with honest users.
                \item Robust to users dropping out
            \end{itemize}
		}
	}
	\caption{User aggregation component}
	\label{fig:ua}
\end{figure}

%\mt{Make sure notations matched with whatever in the Architecture Overview section. Also IND-CPA, PPT.. in Fig 1 haven't defined. Therefore, in the writing of this subsection A, need to write a little more}

\subsection{Zero-knowledge proof of attack-free model}
This component generates a zero-knowledge proof proving the local model is free from a certain poisoning attack. In specific, we first assume the existence of a function that verifies an attack-free model as follows:

\begin{equation}
    R(x) = 
    \begin{cases}
    C(x) \qquad \text{if $x$ is attack-free}\\
    \bot \qquad \text{otherwise}
    \end{cases}
\end{equation}
where $x$ is a local model, $C(x)$ is the commitment of $x$. Next, the component implements a zero-knowledge proof protocol for the users to attest the correct execution of $R(x)$ without revealing $x$. Note that the ZKP itself is not a defense mechanism, but it is a means to verify that the users have executed the function $R(\cdot)$ on their side. Hence, the effectiveness of this component in preventing poisoning attacks depends on the performance of the defense $R(\cdot)$. A specific implementation of this component for defending against backdoor attacks is shown in Section \ref{sec:zkp}. The properties of this component are shown in Fig. \ref{fig:zk}.

\begin{figure}
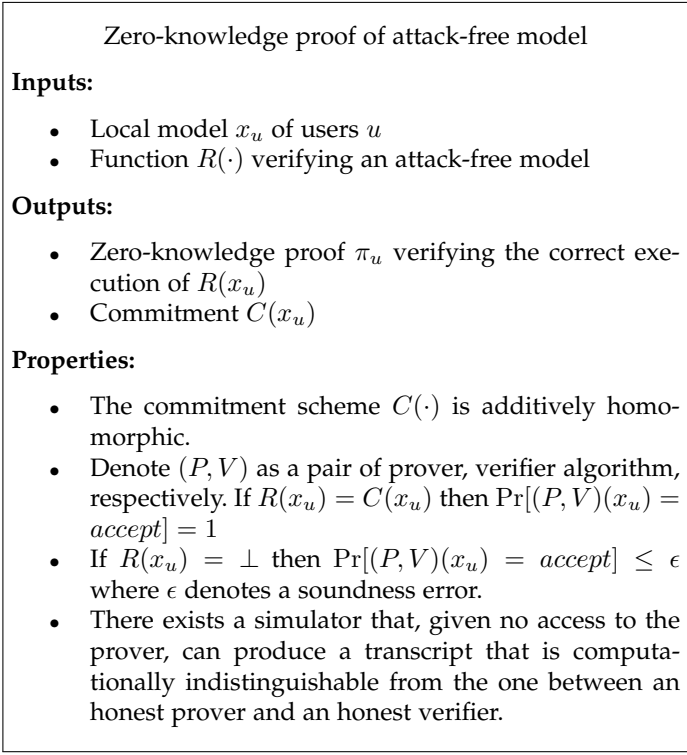

	\noindent\fbox{%
		\parbox{\columnwidth}{%
			\begin{center}
				Zero-knowledge proof of attack-free model
			\end{center}
			
			\noindent \textbf{Inputs:} 
            \begin{itemize}
                \item Local model $x_u$ of users $u$
                \item Function $R(\cdot)$ verifying an attack-free model
            \end{itemize}
            \noindent \textbf{Outputs:} 
            \begin{itemize}
                \item Zero-knowledge proof $\pi_u$ verifying the correct execution of $R(x_u)$
                \item Commitment $C(x_u)$
            \end{itemize}
            \noindent \textbf{Properties:}
            \begin{itemize}
                \item The commitment scheme $C(\cdot)$ is additively homomorphic.
                \item Denote $(P,V)$ as a pair of prover, verifier algorithm, respectively. If $R(x_u) = C(x_u)$ then $\Pr[(P,V)(x_u) = accept] = 1$
                \item If $R(x_u) = \bot$ then $\Pr[(P,V)(x_u) = accept] \leq \epsilon$ where $\epsilon$ denotes a soundness error.
                \item There exists a simulator that, given no access to the prover, can produce a transcript that is computationally indistinguishable from the one between an honest prover and an honest verifier.
            \end{itemize}
		}
	}
	\caption{Zero-knowledge proof component}
	\label{fig:zk}
\end{figure}

The prover algorithm $P$ is run by the users, while the verifier algorithm $V$ is run by the server $S$.

\subsection{Computing global model}
This component takes as input the output of the other two components and produces the corresponding global model. Hence, it serves to combine two previous components and it is run solely by the server. Specifically, it needs to be able to decrypt $\Bar{c}$ from the first component and to validate the zero-knowledge proofs $\{\pi_u\}_{u\in \calU'}$ from the second component. Then, it outputs the sum of the model updates from the honest users. We define this component in Fig. \ref{fig:glb}.

\begin{figure}
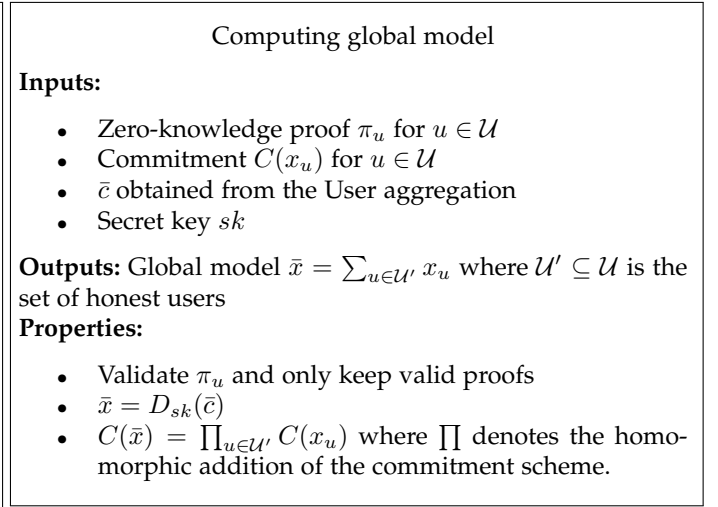

	\noindent\fbox{%
		\parbox{\columnwidth}{%
		    \begin{center}
				Computing global model
			\end{center}
			
			\noindent \textbf{Inputs:} 
            \begin{itemize}
                \item Zero-knowledge proof $\pi_u$ for $u\in \calU$
                \item Commitment $C(x_u)$ for $u\in \calU$
                \item $\Bar{c}$ obtained from the User aggregation
                \item Secret key $sk$
            \end{itemize}
            \noindent \textbf{Outputs:} Global model $\Bar{x} = \sum_{u\in \calU'} x_u$ where $\calU' \subseteq \calU$ is the set of honest users\\
            \noindent \textbf{Properties:}
            \begin{itemize}
                \item Validate $\pi_u$ and only keep valid proofs
                \item $\Bar{x} = D_{sk}(\Bar{c})$
                \item $C(\Bar{x}) = \prod_{u\in \calU'}C(x_u)$ where $\prod$ denotes the homomorphic addition of the commitment scheme.
            \end{itemize}
		}
	}
	\caption{Computing global model component}
	\label{fig:glb}
\end{figure}

From the third component, we can prove that this framework preserves the privacy of users' model updates given that all the components satisfy their properties. Let $Adv$ be a random variable representing the joint views of all parties that are corrupted by the adversary according to the threat model. We show that it is possible to simulate the combined view of any subset of honest-but-curious parties given only the inputs of those parties and the global model $\Bar{x}$. This means that those parties do not learn anything other than their own inputs and $\Bar{x}$.

\begin{theorem}
There exists a PPT simulator $\mathcal{S}$ such that for  ${Adv} \subseteq \mathcal{U} \cup S$, the output of the simulator $\mathcal{S}$ is computationally indistinguishable from the output of $Adv$
\end{theorem}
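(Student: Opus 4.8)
The plan is to construct $\mathcal{S}$ by a standard hybrid/composition argument that stitches together the three component-level simulators guaranteed by the properties in Figures~\ref{fig:ua}, \ref{fig:zk}, and \ref{fig:glb}. Since the adversary corrupts $Adv \subseteq \mathcal{U} \cup S$ in a semi-honest-for-the-server, malicious-for-the-users fashion, I would split into two cases according to whether $S \in Adv$ or not, because the simulator's inputs differ: if $S \notin Adv$, the simulator gets only $\{x_u\}_{u \in Adv}$; if $S \in Adv$, it additionally gets $\bar{x}$ (and the secret key $sk$, which it can sample itself since keys are assumed generated securely and independently). In both cases $\mathcal{S}$ runs the colluding users' code honestly on their true inputs, and must fake the messages sent \emph{to} the corrupted parties by honest users.

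The key steps, in order: (i) Invoke the Users-aggregation simulator (second bullet of Fig.~\ref{fig:ua}) to produce, for each honest user, a view of the aggregation subprotocol that is computationally indistinguishable from the real one; crucially this simulator needs no honest inputs, so it composes cleanly. (ii) Invoke the ZKP simulator (fourth bullet of Fig.~\ref{fig:zk}) to produce, for each honest user $u$, a simulated transcript of $(P,V)$ together with a simulated commitment $C(x_u)$ — here I would rely on IND-CPA / hiding of the commitment so that the faked $C(x_u)$ is indistinguishable from the real one without knowledge of $x_u$. (iii) Handle the final output $\bar{c} = E_{pk}(\sum_{u\in\calU'} x_u)$: if $S \notin Adv$ this ciphertext is never revealed to a corrupted party, so nothing to do; if $S \in Adv$, simulate $\bar{c}$ as a fresh encryption of the given $\bar{x}$ under the self-generated $pk$, and note $D_{sk}(\bar{c}) = \bar{x}$ and $C(\bar{x}) = \prod_{u \in \calU'} C(x_u)$ hold by construction so the consistency checks in Fig.~\ref{fig:glb} pass exactly as in the real execution. (iv) Assemble these into a single simulated joint view and argue indistinguishability by a hybrid sequence: change the aggregation messages, then the ZKP transcripts, then the commitments, then (if applicable) $\bar{c}$, one hybrid at a time, each step's indistinguishability reducing to the corresponding component property or to IND-CPA of $E$.

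The main obstacle I expect is the composition itself rather than any single component: ensuring that the simulated views are \emph{jointly} consistent across components — in particular that the commitment $C(x_u)$ output by the ZKP component and the ciphertext $\bar c$ from the aggregation component agree on the \emph{same} implicit value $x_u$ for each honest user, so that the server's cross-check $C(\bar x) = \prod_{u\in\calU'} C(x_u)$ does not become a distinguishing test. The clean way around this is to have $\mathcal{S}$, when $S \in Adv$, pick the honest users' simulated commitments first and consistently (e.g.\ sample $C(x_u)$ for $u$ in the last honest position as $C(\bar x)\big/\prod_{\text{other honest }u'} C(x_{u'})$ using homomorphism), then feed those into the ZKP simulator, and only then simulate $\bar c$ as an encryption of $\bar x$; malicious users' contributions are simply excluded from $\calU'$ by the verifier just as in a real run, which is well-defined because $\mathcal{S}$ executes their code. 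A secondary subtlety is the dropout case — but since $\calU'$ is defined as whichever honest users survive and the aggregation simulator is already required to be robust to dropouts, this is absorbed into step (i). Modulo these bookkeeping points, indistinguishability follows from a polynomial-length hybrid argument, each hop justified by a property already assumed in the excerpt.
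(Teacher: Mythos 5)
Your proposal takes essentially the same route as the paper's proof: both construct $\mathcal{S}$ by running the corrupted parties on their true inputs, replacing honest users' inputs with dummies, and composing the simulators guaranteed by the User-aggregation and Zero-knowledge-proof components. Your version is in fact more careful than the paper's short sketch --- in particular, your explicit handling of the cross-component consistency check $C(\bar{x}) = \prod_{u\in\calU'} C(x_u)$ (choosing the last honest commitment via the homomorphism so the server's test passes in the simulation) addresses a subtlety the paper's proof does not mention.
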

\begin{proof}
Refer to Appendix A.
\end{proof}

\section{Secure Aggregation in Federated Learning} \label{sec:agg}
In this section, we describe a secure aggregation scheme that realizes the User aggregation component of our framework. Although some secure aggregation schemes have been proposed for federated learning, they are not secure against our threat model in Section \ref{sec:sec}. Particularly, the aggregation schemes in \cite{aono2017privacy} and \cite{zhang2020batchcrypt} assume no collusion between the server and a subset of users. The work of Bonawitz et al. \cite{bonawitz2017practical} fails to reconstruct the global model if one malicious user arbitrarily deviates from the secret sharing protocol. A detailed discussion of these related work is provided in Section \ref{sec:rel}.

The main challenge here is that we want to provide robustness in spite of malicious users, i.e., the protocol should be able to tolerate users that do not behave according to the protocol and produce correct results based on honest users. Moreover, we need to minimize the users' communication overhead when applying this scheme.

\subsection{Cryptographic primitives}
% Before discussing the protocol, we establish some necessary cryptogaphic background for secure aggregation, including homomorphic encryption and commitment scheme.

\noindent\textbf{Damgard-Jurik and Paillier cryptosystems \cite{paillier1999public,damgaard2010generalization}.}
Given $pk = (n, g)$ where $n$ is the public modulus and $g \in \mathbb{Z}^*_{n^{s+1}}$ is the base, the Damgard-Jurik encryption of a message $m \in \mathbb{Z}_{n^s}$ is $\calE_{pk}(m)=g^{m}r^{n^s} \bmod n^{s+1}$ for some random $r \in \mathbb{Z}^*_{n}$ and a positive natural number $s$. The Damgard-Jurik cryptosystem is additively homomorphic such that $\calE_{pk}(m_1) \cdot \calE_{pk}(m_2) = \calE_{pk}(m_1 + m_2)$. This cryptosystem is semantically secure (i.e., IND-CPA) based on the assumption that computing n-th residue classes is believed to be computationally difficult. The Paillier's scheme is a special case of Damgard-Jurik with $s=1$.

In the threshold variant of these cryptosystems \cite{damgaard2010generalization}, several shares of the secret key are distributed to some decryption parties, such that any subset of at least $t$ of them can perform decryption efficiently, while less than $t$ cannot decrypt the ciphertext. Given a ciphertext, each party can obtain a partial decryption. When $t$ partial decryptions are available, the original plaintext can be obtained via a \emph{Share combining} algorithm.

\noindent\textbf{Pedersen commitment \cite{pedersen1991non}.}
A general commitment scheme includes an algorithm $C$ that takes as inputs a secret message $m$ and a random $r$, and outputs the commitment $commit$ to the message $m$, i.e., $commit = C(m, r)$. A commitment scheme has two important properties: (1) $commit$ does not reveal anything about $m$ (i.e., hiding), and (2) it is infeasible to find $(m',r') \neq (m, r)$ such that $C(m', r') = C(m,r)$ (i.e., binding). Pedersen commitment \cite{pedersen1991non} is a realization of such scheme and it is also additively homomorphic such that $C(m,r) \cdot C(m',r') = C(m + m', r + r')$

\subsection{Secure aggregation protocol} \label{ssec:secagg}

Our secure aggregation scheme relies on additive homomorphic encryption schemes, that is $\calE_{pk}(m_1 + m_2) = \calE_{pk}(m_1) \cdot \calE_{pk}(m_2)$. In this work, we use the Daamgard-Jurik public-key cryptosystem \cite{damgaard2010generalization} which is both IND-CPA secure and additively homomorphic. The cryptosystem supports threshold encryption and operates on the set $\mathbb{Z}^*_n$.

We also assume that the communication channel between each pair of users is secured with end-to-end encryption, such that an external observer cannot eavesdrop the data being transmitted between the users. The secure communication channel between any pair of users can be mediated by the server.

In the one-time setup phase, the secret shares $\{s_u\}_{u\in \calU}$ of the private key $sk$ are distributed among $n$ users in $\calU$ via the threshold encryption scheme. The scheme requires a subset $\calU_t \subset \calU$ of at least $t$ users to perform decryption with $sk$. This process of key distribution can be done via a trusted dealer or via an MPC protocol. After that, each secure aggregation process is divided into 2 phases: (1) Encryption and (2) Decryption.

% \mt{Check notations. Mismatched}

\noindent\textbf{Encryption phase:} To securely compute the sum $\Bar{x} = \sum_{u\in \calU} x_u$, each user $u\in \calU$ first computes the encryption $c_u = \calE_{pk}(x_u)$ and then send $c_u$ to the server. The server then computes the product of all $\{c_u\}_{u\in \calU}$, denoting as $\Bar{c}$:

\begin{equation}
\begin{aligned}
    \Bar{c} = \prod_{u\in \calU}c_u \mod{n^{s+1}}
    &= \prod_{u\in \calU} \calE_{pk}(x_u) \mod{n^{s+1}}\\
    &= \calE_{pk}\left(\sum_{u\in \calU}x_u \mod{n^{s+1}}\right)
\end{aligned}  
\end{equation}

Additionally, per IND-CPA, knowing the ciphertext $c_u$ implies nothing about the plaintext $x_u$ for $u\in \calU$. As each user only sends out $c_u$, neither the server $S$ nor other entities $\calU \setminus \{u\}$ with access to $c_u$ can infer any information about $x_u$. This satisfies the security properties that are specified in Fig. \ref{fig:ua}. %If some subset of users dropout, the rest can still submit their $c_u$ and the server can compute $\Bar{c}$ for the remaining users.

\noindent\textbf{Decryption phase:} Suppose that up to this phase, some amount of less than $N-t$ users dropout, that means we still have the subset $\calU_t \subset \calU$ of at least $t$ users remaining to perform decryption. 

First, the server sends $\Bar{c}$ to all users in $\calU_t$. Each user $u \in \calU_t$ computes $c_u' = \Bar{c}^{2s_u\Delta}$ where $\Delta = t!$, and then sends $c_u'$ back to the server. The server then applies the \emph{Share combining} algorithm in \cite{damgaard2010generalization} to compute $\Bar{x} = \sum_{u\in \calU} x_u$ from $\{c_u'\}_{u\in\calU_t}$. As a result, the server can securely obtain $\Bar{x}$ without knowing each individual $x_u$.

\noindent\textbf{User drop-outs.} Furthermore, by design, it can be seen from this protocol that we can tolerate up to $N-t$ users dropping out and the server is still able to construct the global model of all users. In other words, as long as there are at least $t$ users remaining, it will not affect the outcome of the protocol. Therefore, our protocol is robust to user drop-outs up to $N-t$.

\noindent\textbf{Security against active adversary.} We analyze the security in the active adversary model. First, the protocol is secure in spite of collusion between the server and a subset of users in a way that allows the server to decrypt some $c_u$ to obtain the corresponding $x_u$. By using the threshold encryption, as long as there is at least one honest user in $\calU_t$, the server will not be able to learn $x_u$ of any user. Hence, the protocol can tolerate up to $t-1$ malicious users, with $t<|\calU|$.

Second, in the encryption phase, if a user deviates from the protocol, e.g., by computing $c_u$ as a random string or using a different key other than $pk$, the server will not be able to decrypt $\Bar{c}$. To tolerate this behavior, the server can iteratively drop each $c_u$ from $\Bar{c}$ until it obtains a decryptable value. %In particular, the server needs to find the set of malicious users and discards them from the aggregation. 
% This is shown in Algorithm \ref{algo:decryptable}

% \begin{algorithm}
% 	\SetAlgoLined
% 	\KwIn{$\{c_u\}_{u\in \calU}$}
% 	\KwOut{Decryptable $\Bar{c}$, set of malicious users $\mathcal{A}$}
	
% 	$\mathcal{A} \gets \emptyset$\;
% 	\For{$u\in \calU$}
% 	{
% 	    Drop $c_u$ from $\Bar{c}$\;
% 	    \If {$\Bar{c}$ is decryptable}
% 	    {
% 	        $\mathcal{A} \gets \mathcal{A} \cup u$\;
% 	    }
% 	}
	
% 	\SetKwFunction{FInspect}{Inspect}
	
% 	\SetKwProg{Fn}{Procedure}{:}{}
%       \Fn{\FInspect{$root$}}{
%             \If{$\calD_{K_-}(root) = \bot$}
%             {
%                 \If{$root\rightarrow left $ is non-leaf}{\FInspect{$root\rightarrow left $}}
%                 \If{$root\rightarrow right $ is non-leaf}{\FInspect{$root\rightarrow right $}}
%             }
%             \Else{
%                 $S \gets S \cup root$
%             }
%       }
      
%     $S \gets \emptyset$\;
%     \FInspect{root}\;
%     \Return S
% 	\caption{Find decryptable value for aggregation in the active adversary model}
% 	\label{algo:decryptable}
% \end{algorithm}

In the decryption phase, if a user $u^*$ does not cooperate to provide $c_{u^*}'$, then the server will not have all the necessary information to perform decryption. However, if we still have at least $t$ honest users (not counting $u^*$), we can eliminate $u^*$ from $\calU_t$ and add another user to $\calU_t$, hence, the server can still perform decryption.

In summary, our secure aggregation protocol can tolerate $t-1$ malicious users and be robust against $|\calU|-t$ users dropping out (where $t<|\calU|$). Therefore, $t$ should be determined by calibrating the trade-off between the usability (small $t$) and security (large $t$).

\subsection{Efficient encryption of $x_u$}

%\mt{Any related work out there for reducing text size?}

We discuss two methods for reducing the ciphertext size when encrypting $x_u$. Since $x_u$ is a vector of dimension $m$, naively encrypting every element of $x_u$ using either the Paillier or Damgard-Jurik cryptosystems will result in a huge ciphertext with the expansion factor of $2\log n$ (i.e., the ratio between the size of $c_u$ and of $x_u$). Below, we present (1) a polynomial packing method and (2) a hybrid encryption method that includes a key encapsulation mechanism (KEM) and a data encapsulation mechanism (DEM).

\noindent\textbf{Polynomial packing.} We can reduce the expansion factor by packing $x_u$ into polynomials as follows. Suppose $x_u \equiv (x_u^{(1)}, x_u^{(2)}, ..., x_u^{(m)})$, we transform this into:

\begin{equation}
    x_u' = x_u^{(1)} + x_u^{(2)} \cdot 2^b + x_u^{(3)} \cdot 2^{2b} ... + x_u^{(m)} \cdot 2^{(m-1)b}
\end{equation}
where $b = \lceil \log N \rceil$. Hence, $x_u'$ becomes a $mb$-bit number.

Then, to encrypt $x_u$, we simply run the Damgard-Jurik encryption on $x_u'$. Specifically, we obtain the ciphertext $c_u = \calE(x_u')$. This $c_u$ will have $(\lceil \frac{mb}{\log n} \rceil + 1)\log n $ bits. The expansion factor can be calculated as:

\begin{equation}
    \lim_{m \rightarrow \infty} \frac{(\lceil \frac{mb}{\log n} \rceil + 1)\log n }{mb} = 1
\end{equation}

Therefore, as $m$ increases, i.e., larger models, the expansion factor approaches 1, thereby implying minimal overhead. Note that since we are packing into polynomials, the additive homomorphic property of the ciphertexts is retained.

Additionally, if we use this polynomial packing scheme with the Paillier encryption, it would result in an expansion factor of 2 as Paillier requires the plaintext size to be smaller than the key size. To keep the expansion factor of 1 when using Paillier, we introduce a KEM-DEM scheme as follows.

\noindent\textbf{Hybrid encryption (KEM-DEM technique).} In \cite{canteaut2018stream}, the authors propose a KEM-DEM approach to compress the ciphertext of Fully Homomorphic Encryption (FHE). With FHE, they can use blockciphers to generate the keystreams. However, since our work only deals with Partially HE (PHE) like Paillier, we cannot use blockciphers, hence, we need to propose a different approach to generate the keystream with only PHE.

% Recall that in the Paillier cryptosystem, if the public key is the modulus $n$ and the base $g \in \mathbb{Z}^*_{n^2}$, then the encryption of a message $m \in [0, n)$ is $E(m)=g^{m}r^{n} \bmod n^2$, for some random $r \in \mathbb{Z}^*_{n}$. We also have $E(a+b)=E(a) \times E(b)$

The general idea of our protocol is as follows. Suppose a user wants to send $E_{pk}(x)$ to the server (for simplicity, we use the notation $x$ instead of $x_u$). Instead of sending $c = E_{pk}(x)$ directly, the user chooses a secret random symmetric key $k$ that has the same size as an element in $x$ and computes $c' = (E_{pk}(k), SE_k(x))$ where $SE_k(x) = x-k$, which means subtracting $k$ from every element in $x$. $c'$ is then sent to the server. When the server receives $c'$, it can obtain $c = E_{pk}(x) $ by computing $E_{pk}(k) \times E_{pk}(SE_k(x)) = E_{pk}(x) = c$.

% The detail implementation for this scheme is as follows. The user chooses the secret random key $k$ that has the same size as an element in $x$.  We then define $SE_k(x) = x - k$, which means subtracting $k$ from every element in $x$. The user then sends $c' = (E_{pk}(k), x - k)$ to the server. It then obtains $c = E_{pk}(x)$ by computing $E_{pk}(k) \times E_{pk}(x-k) = E_{pk}(x) = c$. 

The correctness of this scheme holds because the server can obtain $E_{pk}(x)$ from the user. The security also holds because the server does not know the key $k$ chosen by the user, hence, it cannot know $x$. However, this scheme is not semantically secure, i.e., IND-CPA secure, because by defining $SE_k(x) = x - k$, identical elements in $x$ will result in the same ciphertext.

To resolve this, we use an IV (initialization vector) for the encryption function. IV is a nonce which could be a random number or a counter that is increased each time it is used. Denote $x = (x_1, ..., x_m)$, $\oplus$ as a bitwise XOR operation, on the user's side:

\begin{enumerate}
    \item  Choose the secret $l$-bit random key $k$ and a nonce IV.
\item compute the keystream $ks$: For $i\in [1,m]$, compute $ks_i = k \times (IV \oplus i) \bmod p$ where $p$ is an $l$-bit number.
\item Compute $SE_k(x) = x - ks = (x_1 - ks_1, ..., x_m - ks_m)$
\item Send $(IV, E_{pk}(k), SE_k(x))$ to the server
\end{enumerate}

\noindent On the server's side, receiving $(IV, E_{pk}(k), SE_k(x))$:
\begin{enumerate}
    \item Compute the encrypted keystream $E_{pk}(ks)$: For $i\in [1,m]$, compute $E_{pk}(k) ^{IV \oplus i} \bmod n^2 = E_{pk}(k \times (IV \oplus i)) = E_{pk}(ks_i)$
\item  Compute $E_{pk}(x) = E_{pk}(SE_k(x)) \cdot E_{pk}(ks) = E_{pk}(x_1 - ks_1 + ks_1, ..., x_m - ks_m + ks_m) = E_{pk}(x_1, ..., x_m)$
\end{enumerate}

\begin{lemma}\label{lemma:dem}
By picking $k$ uniformly at random, and $p$ as an $l$-bit prime number, the DEM component (i.e., $SE_k(x)$) is IND-CPA secure.
\end{lemma}
\begin{proof}
To show that the DEM component is IND-CPA secure, it is sufficient prove that $k \times (IV \oplus i) \bmod p$ is uniformly random with respect to $k$ (Theorem 3.32 in \cite{katz2020introduction}). In fact, in $\mathbb{Z}^*_p$, by picking $k$ uniformly at random, $k \times (IV \oplus i) \bmod p$ is uniformly distributed if and only if neither $k$ nor $(IV \oplus i)$ shares prime factors with $p$. Hence, $p$ should be an $l$-bit prime number and $k$ can be chosen randomly from $\mathbb{Z}^*_p$.
\end{proof}

Since the KEM component (i.e., $E_{pk}(k)$) is IND-CPA secure by default using Paillier, and the DEM component is also IND-CPA secure as previously shown in Lemma \ref{lemma:dem}, we can derive the following theorem:

\begin{theorem}
The KEM-DEM scheme is IND-CPA secure.
\end{theorem}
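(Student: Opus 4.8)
The plan is to prove this by the standard key-encapsulation/data-encapsulation composition argument, reducing IND-CPA security of the combined scheme to the two ingredients already in hand: the Paillier encapsulation of $k$ is IND-CPA secure (the KEM part), and the masking step $SE_k(x)=x-ks$ with $ks_i = k\cdot(IV\|i)\bmod p$ behaves like a secure one-time pad because, as argued above, each $ks_i$ is uniform over the freshly chosen $k\in\mathbb{Z}^*_p$ when $p$ is an $l$-bit prime. Formally, write the composed encryption of a plaintext vector $x=(x_1,\dots,x_m)$ as $\mathsf{Enc}(x)=(IV,\,E_{pk}(k),\,x-ks)$, where $k$ is drawn uniformly at random from $\mathbb{Z}^*_p$ and $IV$ is a fresh nonce, and let $\mathcal{A}$ be a PPT adversary that makes at most $q$ encryption queries and then submits a challenge pair $(x_0,x_1)$.

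First I would introduce a short hybrid sequence. In $H_0$ the challenger plays the real IND-CPA game. In $H_1$ the challenge ciphertext is produced exactly as before except that the KEM component $E_{pk}(k)$ is replaced by $E_{pk}(k')$ for an independent uniform $k'\in\mathbb{Z}^*_p$, so that the ciphertext seen by $\mathcal{A}$ no longer carries any information about the $k$ actually used to mask $x_b$. A direct reduction $\mathcal{B}$ shows $\bigl|\Pr[\mathcal{A}\text{ wins }H_0]-\Pr[\mathcal{A}\text{ wins }H_1]\bigr|\le \mathsf{Adv}^{\text{IND-CPA}}_{\text{Paillier}}(\mathcal{B})$: $\mathcal{B}$ receives a Paillier public key and a challenge ciphertext for one of the two plaintexts $\{k,k'\}$ (both of which $\mathcal{B}$ sampled), plants it in the KEM slot of $\mathcal{A}$'s challenge, forms the DEM part from the $k$ it knows, and answers every non-challenge encryption query honestly using $pk$ alone — no secret key is ever needed because the IND-CPA game gives $\mathcal{A}$ no decryption oracle.

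Next I would argue that $\mathcal{A}$ has no advantage in $H_1$. There the genuine key $k$ is used only to form the mask $ks$; by the lemma already established, for each coordinate $i$ the value $ks_i=k\cdot(IV\|i)\bmod p$ is uniform over $\mathbb{Z}^*_p$, hence $x_b-ks$ is distributed independently of the challenge bit $b$, while the remaining components $(IV,E_{pk}(k'))$ are independent of $b$ as well. Therefore $\Pr[\mathcal{A}\text{ wins }H_1]=1/2$, up to the probability that two of the $q+1$ sampled nonces collide, which is at most $\binom{q+1}{2}2^{-|IV|}$. Collecting the two bounds yields $\mathsf{Adv}^{\text{IND-CPA}}_{\mathsf{Enc}}(\mathcal{A})\le \mathsf{Adv}^{\text{IND-CPA}}_{\text{Paillier}}(\mathcal{B})+\binom{q+1}{2}2^{-|IV|}$, which is negligible, proving the claim.

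The main obstacle I anticipate is the book-keeping around the many-query CPA game rather than the single-message intuition: I need to argue carefully that the per-coordinate uniformity of $ks_i$ suffices to hide the whole $m$-dimensional vector $x_b$, and that reusing the KEM–DEM structure across the $q$ non-challenge queries (each with its own fresh $k$ and nonce) does not let $\mathcal{A}$ correlate ciphertexts — which is precisely where nonce-uniqueness enters. Deciding whether to assume a nonce-respecting adversary or instead to absorb nonce collisions into the negligible term, and stating the resulting concrete bound cleanly, is the delicate part; everything else is a routine instantiation of the KEM–DEM composition theorem together with the uniformity lemma already proved for the keystream.
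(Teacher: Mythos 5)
Your overall strategy---hybridize over the KEM, then treat the DEM as a one-time pad---is the standard KEM--DEM composition and is essentially the route the paper takes, but the step you yourself flag as delicate, namely that per-coordinate uniformity of $ks_i$ suffices to hide the whole vector, is exactly where the argument breaks, and it cannot be repaired for this keystream. The values $ks_1,\dots,ks_m$ are all deterministic functions of the single scalar $k$: each equals $k\cdot(IV\|i)\bmod p$ with $IV$ and $i$ public. So while each $ks_i$ is marginally uniform over the choice of $k$, the vector $ks$ carries only about $l$ bits of entropy and is far from uniform on $(\mathbb{Z}^*_p)^m$; any single coordinate determines all the others, since $k = ks_1\cdot(IV\|1)^{-1}\bmod p$. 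Concretely, in your game $H_1$ an adversary with $m=2$ submits $x_0=(0,0)$ and $x_1=(0,1)$; from the first ciphertext coordinate $c_1 = 0 - ks_1$ it recovers $ks_1$, hence $k$, hence $ks_2$, hence $x_{b,2}=c_2+ks_2$, and wins with probability essentially $1$. So $\Pr[\mathcal{A}\text{ wins }H_1]$ is not $1/2$, and your final bound does not follow.

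This is not a defect you introduced: the paper's own justification has the same gap. It shows only that each $k\times(IV\|i)\bmod p$ is individually uniform and then appeals to the stream-cipher security theorem of Katz--Lindell, but that theorem requires the keystream to be (pseudo)random as a whole string, not merely uniform coordinate by coordinate. To make either proof go through, the keystream would have to be jointly pseudorandom given the public $IV$---for instance $ks_i = F_k(IV\|i)$ for a PRF $F$---at which point your hybrid argument, including the nonce-collision bookkeeping you describe, would close cleanly.
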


Figure \ref{fig:kemdem} illustrates this scheme. The bandwidth expansion factor of this scheme can be computed as:
\begin{equation}
    \lim_{m \rightarrow \infty} \frac{mb + 2\log n + c}{mb} = 1
\end{equation}
where $c$ is the size of the IV. Therefore, this scheme also achieves the expansion factor close to 1 as $m$ grows, however, it incurs some computational overhead as compared to the polynomial packing scheme.

\begin{figure}
    \centering
    \includegraphics[width=\linewidth]{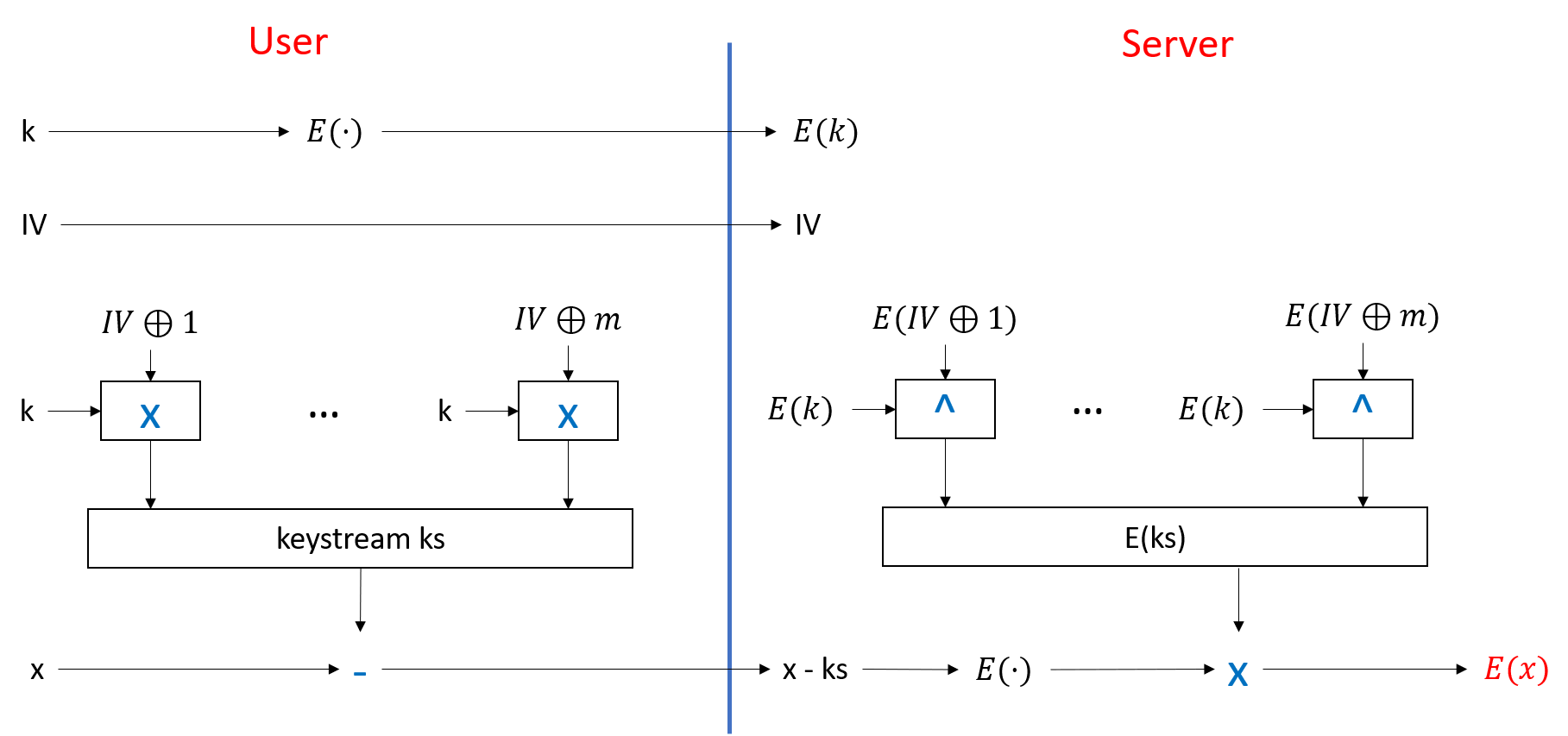}
    \caption{KEM-DEM for PHE. $\oplus$ denotes a bitwise XOR operation.}
    \label{fig:kemdem}
\end{figure}

Although both the KEM-DEM scheme and polynomial packing have the same asymptotic expansion factor (e.g., 1), each has its own advantages and disadvantages. The polynomial packing is simple and works out-of-the-box if we use the Damgard-Jurik cryptosystem. This is because Damgard-Jurik allows the plaintext size to be greater than the key size. However, it will not work with conventional cryptosystems like Paillier as they require the plaintext size to be smaller than the key size. The KEM-DEM scheme would work for all cryptosystems but it is more complex and incurs more computational overhead than the polynomial packing scheme. Since our protocol is using the Damgard-Jurik cryptosystem, Section \ref{sec:eval} only evaluates the polynomial packing scheme.

\section{Use case: ZKP for backdoor defenses} \label{sec:zkp}
This section describes an implementation that realizes the Zero-knowledge proof component of our framework. Using the backdoor attack as a use case, we provide a ZKP protocol that can detect backdoor attacks from users. Before that, we establish some background regarding multi-party protocol (MPC) and ZKP.

\subsection{Preliminaries}
\noindent\textbf{Multi-party computation (MPC) protocol.}
In a generic MPC protocol, we have $n$ players $P_1, ..., P_n$, each player $P_i$ holds a secret value $x_i$ and wants to compute $y = f(x)$ with $x = (x_1, ..., x_n)$ while keeping his input private. The players jointly run an $n$-party MPC protocol $\Pi_f$. The \emph{view} of the player $P_i$, denoted by $View_{P_i}(x)$, is defined as the concatenation of the private input $x_i$ and all the messages received by $P_i$ during the execution of $\Pi_f$. Two views are \emph{consistent} if the messages reported in $View_{P_j}(x)$ as incoming from $P_i$ are consistent with the outgoing message implied by $View_{P_j}(x)$ $(i \neq j)$. Furthermore, the protocol $\Pi_f$ is defined as perfectly correct if there are $n$ functions $\Pi_{f,1}, ..., \Pi_{f,n}$ such that $y = \Pi_{f,i}(View_{P_i}(x))$ for all $i \in [n]$. An MPC protocol is designed to be $t$-private where $1 \leq t < n$: the views of any $t$-subset of the parties do not reveal anything more about the private inputs of any other party.

\noindent\textbf{Zero-knowledge proof (ZKP).}
Suppose that we have a public function $f$, a secret input $x$ and a public output $y$. By using a ZKP protocol, a prover wants to prove that it knows an $x$ s.t. $f(x) = y$, without revealing what $x$ is. A ZKP protocol must have three properties as follows: (1) if the statement $f(x) = y$ is correct, the probability that an honest verifier accepting the proof from an honest prover is 1 (i.e., completeness), (2) if the statement is incorrect, with a probability less than some small soundness error, an honest verify can accept the proof from a dishonest prover showing that the statement is correct (i.e., soundness), and (3) during the execution of the ZKP protocol, the verifier cannot learn anything other than the fact that the statement is correct (i.e., zero-knowledge).

\noindent\textbf{Backdoor attacks and defenses.}
Backdoor attacks aim to make the target model to produce a specific attacker-chosen output label whenever it encounters a known trigger in the input. At the same time, the backdoored model behaves similarly to a clean model on normal inputs. In the context of federated learning, any user can replace the global model with another so that (1) the new model maintains a similar accuracy on the federated-learning task, and (2) the attacker has control over how the model behaves on a backdoor task specified by the attacker \cite{bagdasaryan2020backdoor}. 

% Federated learning is generically vulnerable to backdoors and other model-poisoning attacks. Firstly, when training with millions of users, it is not possible to assume that all of them are honest. The possibility of training with multiple malicious users is explicitly acknowledged \cite{bonawitz2019towards}. Secondly, neither defenses against data poisoning nor anomaly detection can be used in federated learning to defend against these attacks because they require access to the users’ training data or their submitted model updates, respectively. The aggregation server cannot observe either the training data, or model updates based on these data without violating users’ privacy, which is the key purpose of using federated learning. 

% Federated learning is generically vulnerable to backdoors and other model-poisoning attacks. Firstly, when training with millions of users, it is not possible to assume that all of them are honest. The possibility of training with multiple malicious users is explicitly acknowledged \cite{bonawitz2019towards}. Secondly, neither defenses against data poisoning nor anomaly detection can be used in federated learning to defend against these attacks because they require access to the users’ training data or their submitted model updates, respectively. The aggregation server cannot observe either the training data, or model updates based on these data without violating users’ privacy, which is the key purpose of using federated learning. 

To defend against backdoor attacks, Liu et al. \cite{liu2018fine} propose a \emph{pruning defense} that can disable backdoors by removing \textit{backdoor neurons} that are dormant for normal inputs. The defense mechanism works in the following manner: for each iteration, the defender executes the backdoored DNN on samples from the testing dataset, and stores the activation of each neuron per test sample. The defender then computes the neurons' average activations over all samples. Next, the defender prunes the neuron having the lowest average activation and obtains the accuracy of the resulted DNN on the testing dataset. The defense process terminates when the accuracy drops below a predetermined threshold.

When implementing this pruning defense, to prune a neuron, we simply set the weights and biases associated to that neuron to zero. In this way, we can retain the structure of the local models $x_u$.

% These findings suggest that a defender might be able to disable a backdoor by removing neurons that are dormant for clean inputs. Liu et al. \cite{liu2018fine} propose the \emph{pruning defense} utilizing this strategy. The pruning defense works as follows: the defender exercises the DNN received from the attacker with clean inputs from the validation dataset, and records the average activation of each neuron. The defender then iteratively prunes neurons from the DNN in increasing order of average activations and records the accuracy of the pruned network in each iteration. The defense terminates when the accuracy on the validation dataset drops below a pre-determined threshold.

\subsection{ZKP protocol to backdoor detection}

To inspect a local model for backdoors while keeping the model private, we propose a non-interactive ZKP such that each user $u\in \calU$ can prove that $x_u$ is clean without leaking any information about $x_u$. 

First, we devise an algorithm $backdoor$ that checks for backdoor in a given DNN model $x$ as in Algorithm \ref{algo:backdoor}.
\begin{algorithm}
	\SetAlgoLined
	\KwIn{DNN model $x$, validation dataset}
	\KwOut{1 if $x$ is backdoored, 0 otherwise}
      
    Test $x$ against the validation dataset, and record the average activation of each neuron and the accuracy\;
    $k \gets$ the neuron that has the minimum average activation\;
    Prune $k$ from $x$\;
    Test $x$ against the validation dataset and record the accuracy\;
    \If{accuracy drops by a threshold $\tau$}
    {
        \Return 1
    }
    \Else{
        \Return 0
    }
	\caption{$backdoor(x)$ -- Check for backdoors in a DNN model}
	\label{algo:backdoor}
\end{algorithm}

\begin{theorem}
$backdoor(x)$ returns 1 $\Rightarrow$ $x$ is a non-backdoored model
\end{theorem}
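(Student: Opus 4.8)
The plan is to prove the contrapositive: if $x$ is a clean (non-backdoored) model, then $backdoor(x)$ returns $0$. Since Algorithm~\ref{algo:backdoor} outputs $1$ only when pruning the single neuron $k$ of minimum average activation over the validation set causes the validation accuracy to drop by at least $\tau$, it suffices to show that for a clean model this drop is strictly below $\tau$. So the first step I would take is to reduce the theorem to this quantitative statement about clean models and the pruning step.

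Next I would fix the operational definitions, following Liu et al.~\cite{liu2018fine}: a backdoored network is modeled as one containing a set of \emph{backdoor neurons} whose average activation on the clean input distribution is essentially zero (they are dormant on clean data) but which fire on triggered inputs and are precisely the units responsible for flipping predicted labels on triggered inputs; a clean network contains no such neuron. The key technical step is then a \emph{dormant-neuron robustness} lemma: removing a neuron $k$ perturbs the pre-softmax logits on each input by a quantity bounded by $\|w_k^{\mathrm{out}}\|$ times $k$'s activation on that input, hence by a quantity whose validation-set average is $\|w_k^{\mathrm{out}}\| \cdot \bar a_k$, where $\bar a_k$ is $k$'s average activation. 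Combining this with a margin argument, a prediction changes only if the logit perturbation exceeds the gap between the top-$1$ and top-$2$ class scores, I would bound the fraction of validation points whose label flips, and therefore the accuracy drop, by a quantity that vanishes as $\bar a_k \to 0$. Since in a clean model $k$ is the globally least-active neuron, $\bar a_k$ is at the dormancy level, so for $\tau$ calibrated above this bound the accuracy drop is below $\tau$ and the algorithm returns $0$. Contraposing, if $backdoor(x)$ returns $1$ then $x$ contains a least-active neuron whose removal nonetheless destroys clean accuracy, which by the Liu et al. characterization is exactly the signature of a backdoor neuron, so $x$ is backdoored.

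The main obstacle I expect is making the dormant-neuron robustness lemma genuinely rigorous rather than heuristic. In full generality the intermediate claim is false: a neuron can have tiny \emph{average} activation yet be decisive on a small but non-negligible fraction of validation inputs (a rare-feature detector), so pruning it can cause a large accuracy drop even in a perfectly clean model. Hence the theorem cannot hold without an additional hypothesis, either a distributional/architectural assumption that the least-active neuron's activation is uniformly (not merely on average) small on the validation set, or, more honestly, the assumption that $\tau$ is chosen strictly larger than the worst-case single-dormant-neuron accuracy drop attainable by clean models on the dataset at hand. Pinning down the minimal such hypothesis and stating the theorem relative to it is where the real work lies; the remaining Lipschitz-and-margin bookkeeping is routine.
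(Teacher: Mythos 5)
Your proposal follows the same logical route as the paper's proof: argue the contrapositive (clean $\Rightarrow$ the accuracy drop from pruning the least-active neuron stays below $\tau$) and ground it in Liu et al.'s characterization of dormant versus backdoor neurons. The difference is one of ambition. The paper's proof is a one-paragraph appeal to the \emph{empirical} three-phase behavior of the pruning defense reported in \cite{liu2018fine} --- neurons pruned early have no impact on clean accuracy, and accuracy only begins to fall once clean-activated neurons are reached --- and concludes directly from that narrative. You instead try to turn the dormancy intuition into a quantitative lemma (logit perturbation bounded by $\|w_k^{\mathrm{out}}\|\cdot \bar a_k$ plus a margin argument), and in doing so you correctly expose why no such lemma can hold unconditionally: a neuron with tiny \emph{average} activation can still be decisive on a non-negligible fraction of validation inputs, so a clean model can suffer a large accuracy drop from pruning its least-active neuron. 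This is a genuine gap, but it is a gap in the theorem itself (and in the paper's proof, which never addresses it), not a defect introduced by your argument; the paper silently assumes exactly the kind of hypothesis you identify --- that $\tau$ exceeds the worst-case accuracy drop a clean model can incur from removing a single dormant neuron. Your proposal is therefore at least as strong as the paper's proof, and more honest about the conditions under which the statement is actually true. One further point worth noting, which neither you nor the paper addresses: Algorithm~\ref{algo:backdoor} performs only a single prune-and-test step, whereas the three-phase description in \cite{liu2018fine} concerns an iterated pruning schedule, so even the empirical justification requires the additional (unstated) claim that the relevant phase boundary is visible after one pruning step.
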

\begin{proof}
Refer to Appendix B.
\end{proof}

Given a model $x$, we define $R(r, x)$ for some random $r$ as follows:
\begin{equation}
    R(r,x) = 
    \begin{cases}
    C(r,x) \qquad \text{if $backdoor(x)$ returns 1}\\
    \bot \qquad \text{otherwise}
    \end{cases}
\end{equation}
where $C(r,x)$ is the commitment of $x$

Based on \cite{ishai2007zero}, we construct the ZKP protocol as follows. Each user $u\in \calU$ generates a zero-knowledge proof $\pi_u$ proving $x_u$ is clean, i.e., $R(r_u,x_u) = C(r_u,x_u)$. To produce the proof, each user $u$ runs Algorithm \ref{algo:prove} to obtain $\pi_u$. Note that $\Pi_f$ is a 3-party 2-private MPC protocol, which means a subset of any 2 views do not reveal anything about the input $x_u$. Then, $\pi_u$ and $commit_u = C(r_u,x_u)$ are sent to the server where they will be validated as in Algorithm \ref{algo:verify}.

Finally, for the server to validate that the $\Bar{x}$ obtained from secure aggregation is the sum of the models that were validated by the ZKP protocol, it collects $\{r_u\}_{u\in \calU}$ and verify that $\prod_{u\in \calU} commit_u = C\left(\sum_{u\in \calU} r_u, \Bar{x}\right)$. This is derived from the additive homomorphic property of the Pedersen commitment scheme, where

\begin{equation}
\begin{aligned}
    \prod_{u\in \calU} commit_u = \prod_{u\in \calU} C(r_u, x_u) &= C\left(\sum_{u\in \calU} r_u, \sum_{u\in \calU} x_u\right)\\
    &= C\left(\sum_{u\in \calU} r_u, \Bar{x}\right)
\end{aligned}
\end{equation}

\begin{algorithm}
	\SetAlgoLined
	\KwIn{$x_u$, $\lambda \in \mathbb{N}$, a perfectly correct and 2-private MPC $\Pi_f$ protocol among $3$ players}
	\KwOut{Proof $\pi_u$}
    
    $\pi_u \gets \emptyset$\;
    \For{$k = 1,..., \lambda$}
    {
        Samples $3$ random vectors $x_1, x_2, x_3$ such that $x_u = x_1 + x_2 + x_3$\;
        Consider an $3$-input function $f(x_1, x_2, x_3) = R(r, x_1 + x_2 + x_3)$ and emulate the protocol $\Pi_f$ on inputs $x_1, x_2, x_3$. Obtain the views $v_i = View_{P_i}(x)$ for all $i \in [3]$\;
        Computes the commitments $commit_1, ..., commit_3$ to each of the $n$ produced views $v_1, v_2, v_3$\;
        For $j \in \{1, 2\}$, computes $e_j = H(j, \{commit_i\}_{i\in [3]})$\;
        $\pi_u \gets \pi_u \cup \left(\{e_j\}_{j\in \{1,2\}},\{v_{e_j}\}_{j\in \{1,2\}},\{commit_i\}_{i\in [3]}\right)$
    }
    
    \Return $\pi_u$
    
	\caption{Generate zero-knowledge proof}
	\label{algo:prove}
\end{algorithm}

\begin{algorithm}
	\SetAlgoLined
	\KwIn{$\pi_u, commit_u$}
	\KwOut{1 if accept, 0 if reject}
    
    \For{$p_k \in \pi_u$}
    {
        $\left(\{e_j\}_{j\in \{1,2\}},\{v_{e_j}\}_{j\in \{1,2\}},\{commit_i\}_{i\in [3]}\right) \gets p_k$\;
        \If{$\{commit_i\}_{i\in [3]}$ is invalid}{\Return 0}
        \If{$e_j \neq H(j, \{commit_i\}_{i\in [3]})$ for $j \in \{1, 2\}$}{\Return 0}
        \If{$\exists e\in \{e_1, e_2\}: \Pi_{f,e}(View_{P_e}(x)) \neq commit_u$}{\Return 0}
        \If{$v_{e_1}$ and $v_{e_2}$ are not consistent with each other}{\Return 0}
    }
    
    \Return 1
    
	\caption{Validate zero-knowledge proof}
	\label{algo:verify}
\end{algorithm}

\begin{theorem}
The proposed zero-knowledge proof protocol satisfies the completeness, zero-knowledge, and soundness properties with soundness error $\frac{2}{3}^{\lambda}$ where $\lambda$ is a security parameter.
\end{theorem}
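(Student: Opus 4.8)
The plan is to follow the ``MPC-in-the-head'' paradigm of Ishai et al.~\cite{ishai2007zero}, on which Algorithms~\ref{algo:prove} and~\ref{algo:verify} are based. I would fix the relation being proven to be ``the prover knows an opening $(r,x)$ of $commit_u$ with $R(r,x)=commit_u$'' (equivalently, with $backdoor(x)$ outputting $1$ as in Algorithm~\ref{algo:backdoor}), model $H$ as a random oracle, assume the view-commitment scheme is computationally hiding and binding, and then establish completeness, soundness, and zero-knowledge separately, treating $\Pi_f$ as the perfectly correct, $2$-private, $3$-party protocol from the hypothesis.

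\emph{Completeness.} When the statement holds and the prover is honest, each iteration $k$ of Algorithm~\ref{algo:prove} emulates a genuine execution of $\Pi_f$ on shares $x_1,x_2,x_3$ of $x_u$; the three views are therefore pairwise consistent, and perfect correctness gives $\Pi_{f,i}(v_i)=f(x_1,x_2,x_3)=R(r,x_u)=commit_u$ for every $i$. Hence every test in Algorithm~\ref{algo:verify} passes for any challenge $(e_1,e_2)$, so the verifier accepts with probability $1$.

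\emph{Soundness.} The crux is a ``gluing'' lemma: if in some iteration the three committed views $v_1,v_2,v_3$ are such that all three pairs $\{1,2\},\{1,3\},\{2,3\}$ are mutually consistent and each $v_i$ reconstructs the output $commit_u$, then the views splice into one globally consistent transcript of $\Pi_f$ on inputs $x_1',x_2',x_3'$ read off from the views, and perfect correctness forces $R(r,\,x_1'+x_2'+x_3')=commit_u$, i.e.\ the statement is true with witness $\sum_i x_i'$. Contrapositively, if the statement is false then in every iteration at least one of the three pairs fails some check in Algorithm~\ref{algo:verify}; a challenge opening exactly that pair occurs with probability at least $1/3$ and causes rejection. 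Binding pins down $v_1,v_2,v_3$ before $e_j=H(j,\{commit_i\})$ is determined (up to a negligible binding-break term), and in the random-oracle model the challenges are uniform and independent across the $\lambda$ iterations, so a cheating prover is accepted with probability at most $\left(\tfrac{2}{3}\right)^{\lambda}$, as claimed (plus the negligible binding term; a grinding adversary making $Q$ oracle queries contributes the usual extra factor $Q$, which I would absorb into the ``computational soundness'' statement).

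\emph{Zero-knowledge and the main obstacle.} I would exhibit a simulator $\mathcal{S}$ that, per iteration, (i) samples the pair $\{j,k\}\subseteq\{1,2,3\}$ to be opened uniformly, (ii) runs the $2$-privacy simulator of $\Pi_f$ for parties $\{j,k\}$ with target output $commit_u$ to obtain correctly distributed, mutually consistent views $v_j,v_k$ reconstructing $commit_u$, (iii) commits honestly to $v_j,v_k$ and to a dummy string for the third view, and (iv) programs $H$ so that its outputs select $\{j,k\}$; a hybrid over the $\lambda$ iterations then shows the transcript is computationally indistinguishable from a real one, the only gaps being the commitment to the unopened view (indistinguishable by hiding) and the opened views (identically distributed by $2$-privacy of $\Pi_f$). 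I expect the soundness direction to be the main obstacle: carefully proving the gluing lemma for this particular $\Pi_f$ and rigorously handling the Fiat--Shamir/random-oracle subtleties so that the clean bound $\left(\tfrac{2}{3}\right)^{\lambda}$ is justified; completeness is immediate, and zero-knowledge reduces routinely to invoking the $2$-privacy simulator of $\Pi_f$.
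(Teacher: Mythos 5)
Your proposal is correct and follows essentially the same route as the paper's proof: completeness from perfect correctness of $\Pi_f$, soundness via the IKOS-style case analysis showing that for a false statement some pair of views must fail a check and is opened with probability at least $1/\binom{3}{2}$, giving a per-iteration error of $\tfrac{2}{3}$ amplified to $\left(\tfrac{2}{3}\right)^{\lambda}$, and zero-knowledge from the $2$-privacy simulator of $\Pi_f$ together with commitment hiding. Your treatment is in fact somewhat more careful than the paper's, which states the gluing dichotomy informally and does not account for the Fiat--Shamir grinding factor $Q$ or the negligible binding term that you correctly fold into the computational soundness bound.
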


\begin{proof}
Refer to Appendix C.
\end{proof}

\section{Evaluation} \label{sec:eval}
In this section, we analyze the computation and communication cost of the proposed protocol on both the users and the server. Then we implement a proof-of-concept and benchmark the performance.

% In this section, we implement a proof-of-concept and benchmark the performance of the secure aggregation and ZKP protocols.

\subsection{Performance analysis}
\noindent\textbf{Users.} For the communication cost, w.r.t the secure aggregation, a user has to send out the encrypted model, which cost $O(m)$. Regarding the zero-knowledge proof, the user needs to submit a proof of size $O(mk\lambda)$ where $k$ is the size of the commitments. %Overall, the communication complexity for each user is $O(mkn^2\lambda)$.

As regards the computation cost, the computation includes packing the data vector into polynomials and perform encryption which take $O(m+s)$ in total for the secure aggregation. The zero-knowledge proof protocol requires $\lambda$ runs of $O(L)$ model inferences where $L$ is the size of the validation dataset, thus the total cost is $O(\lambda L)$
% For the secure aggregation, the computation includes packing the data vector into polynomials and perform encryption which take $O(m+s)$ in total. Then, a user has to send out the encrypted model, which cost $O(m)$.

% Regarding the zero-knowledge proof, it requires $n^2\lambda$ runs of $O(L)$ model inferences where $L$ is the size of the validation dataset and $n$ is the number of MPC-emulated players, thus the total cost is $O(n^2\lambda L)$. Sending out the proof also incurs a communication cost of $O(mkn^2\lambda)$ where $k$ is the size of the commitments.
% \begin{itemize}
%     \item Communication cost: for the secure aggregation, a user has to send out the encrypted model, which cost $O(m)$. Regarding the zero-knowledge proof, the user needs to submit a proof of size $O(mkn^2\lambda)$ where $k$ is the size of the commitments. Overall, the communication complexity for each user is $O(mkn^2\lambda)$.
%     \item Computation cost: for the secure aggregation, the computation includes packing the data vector into polynomials and perform encryption which take $O(m+s)$ in total. The zero-knowledge proof protocol requires $n^2\lambda$ runs of $O(L)$ model inferences where $L$ is the size of the validation dataset and $n$ is the number of MPC-emulated players, thus the total cost is $O(n^2\lambda L)$
% \end{itemize}

\noindent\textbf{Server.} In the secure aggregation scheme, since the server receives data that are sent from all users, the communication complexity is $O(m|\calU|)$. As regards the computation cost, for the secure aggregation, the computation includes aggregating the ciphertext from each user and perform decryption on the aggregated model, which takes $O(m|\calU|^2)$ in total. Validating the zero-knowledge proofs from all users requires another $O(|\calU|k\lambda)$ computations.

% \begin{itemize}
%     \item User:
%     \begin{itemize}
%         \item Communication: $O(m)$
%         \item Computation: $O(m)$
%     \end{itemize}
%     \item Server:
%     \begin{itemize}
%         \item Communication: $O(mN)$
%         \item Computation: $O(mN^2)$
%     \end{itemize}
% \end{itemize}

\subsection{Proof-of-concept and benchmarking}
We evaluate the overhead of our proposed framework based on the running time and communication costs. To that extent, we use the following metrics 
\begin{itemize}
    \item Wall-clock running time: the total time it takes from the beginning to the end of the framework execution.
    \item Bandwidth consumption: the total amount of data sent/received by each entity in the system
    \item Bandwidth expansion factor: the ratio between the total bandwidth consumption when using the framework divided by the total bandwidth consumption when not using the framework.
\end{itemize}

\subsubsection{Secure aggregation.} To measure the performance, we implement a proof-of-concept in C\texttt{++} and Python 3. The Damgard-Jurik cryptosystem is used with 1024-, 2048-, 3072-, and 4096-bit keys. To handle big numbers efficiently, we use the GNU Multiple Precision Arithmetic Library \cite{granlund2010gnu}. We assume that each element in $\sum_{u\in \calU} x_u$ can be stored with 3 bytes without overflow. This assumption conforms with the experiments in \cite{bonawitz2017practical}. All the experiments are conducted on a single-threaded machine equipped with an Intel Core i7-8550U CPU and 16GB of RAM running Ubuntu 20.04.

\begin{figure}
    \centering
    \includegraphics[width=0.7\linewidth]{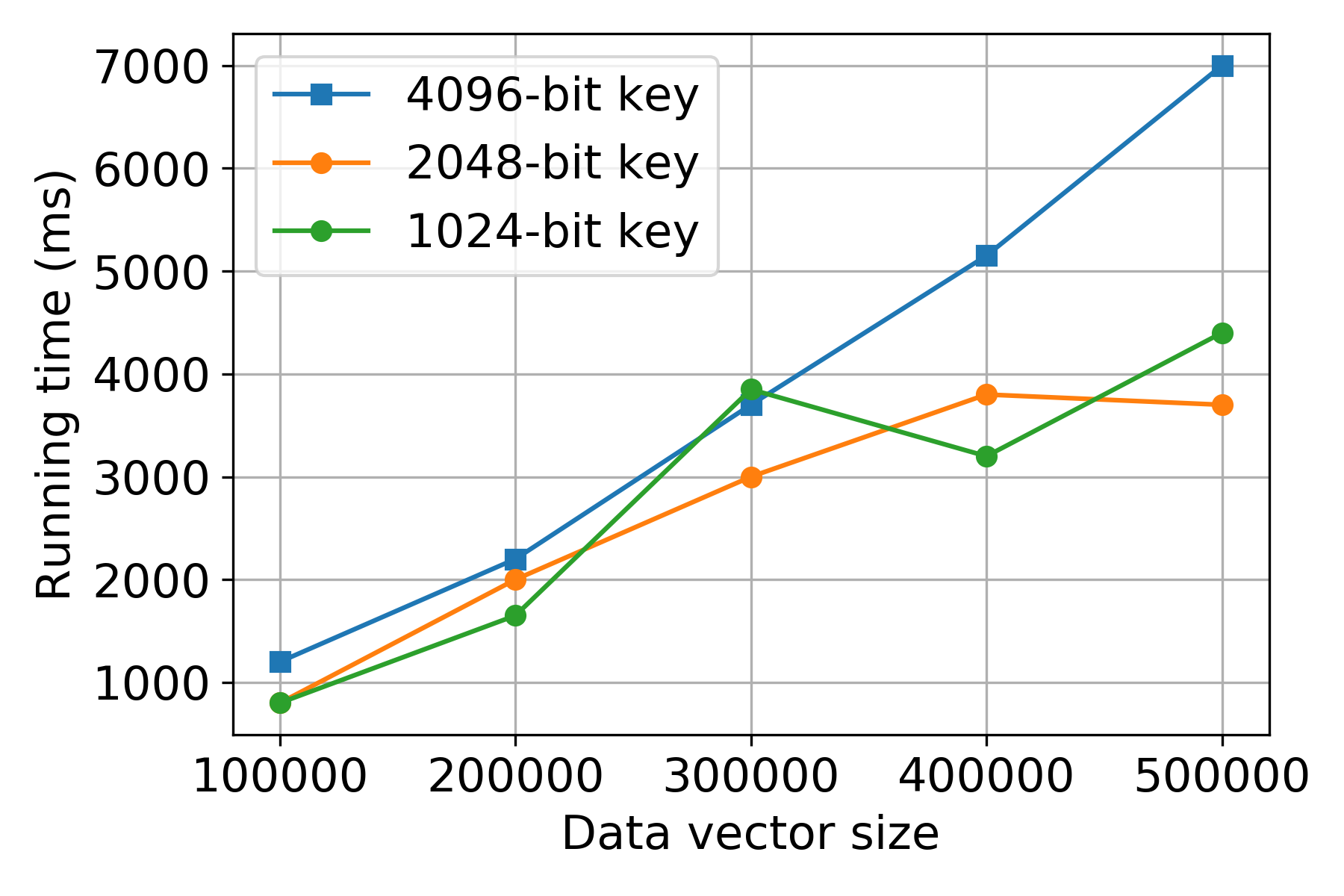}
    \caption{Wall-clock running time per user, as the size of the
data vector increases.}
    \label{fig:time}
\end{figure}

% \begin{figure}
%     \centering
%     \includegraphics[width=0.7\linewidth]{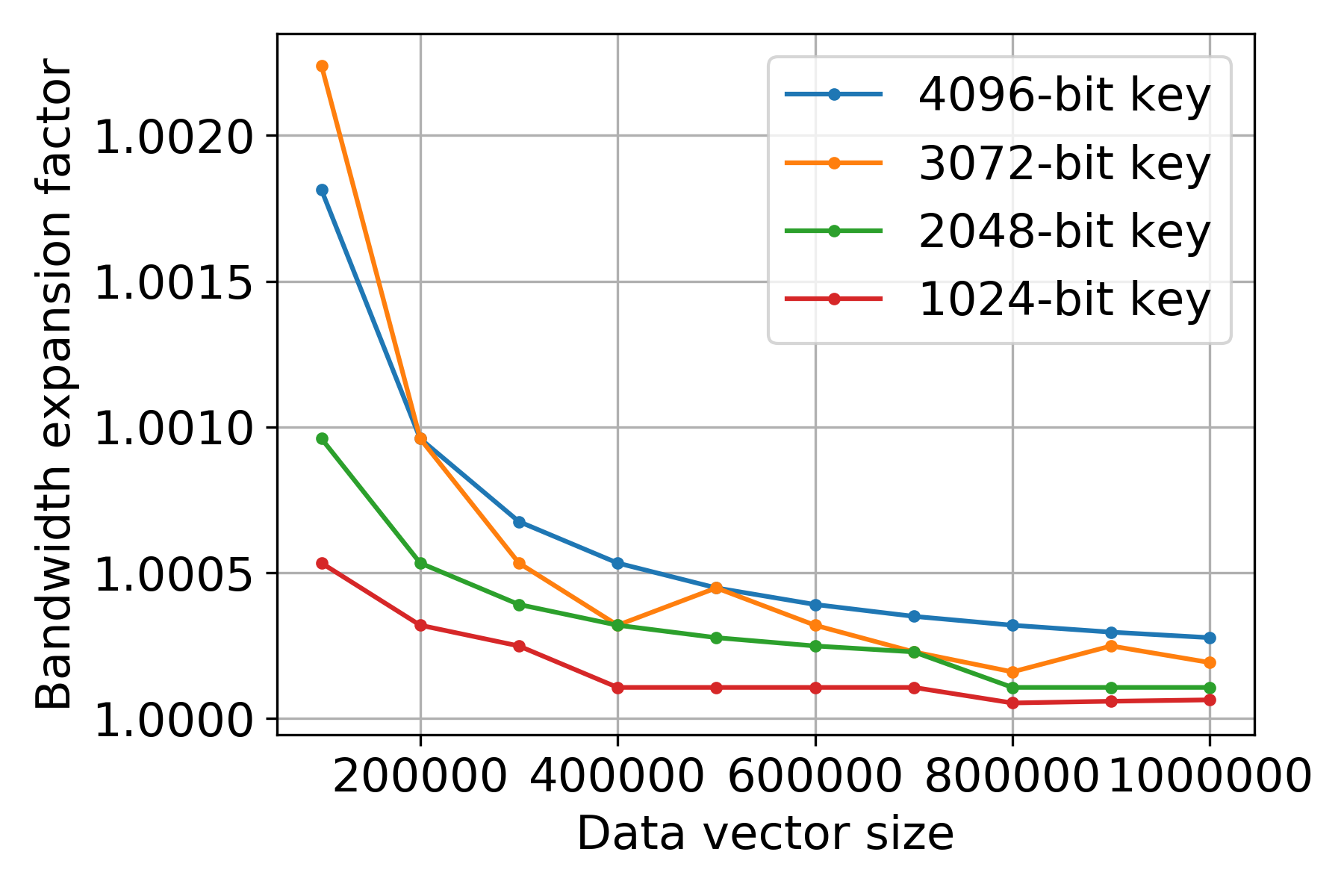}
%     \caption{}
%     \label{fig:factor}
% \end{figure}

% \begin{figure}
%     \centering
%     \includegraphics[width=0.7\linewidth]{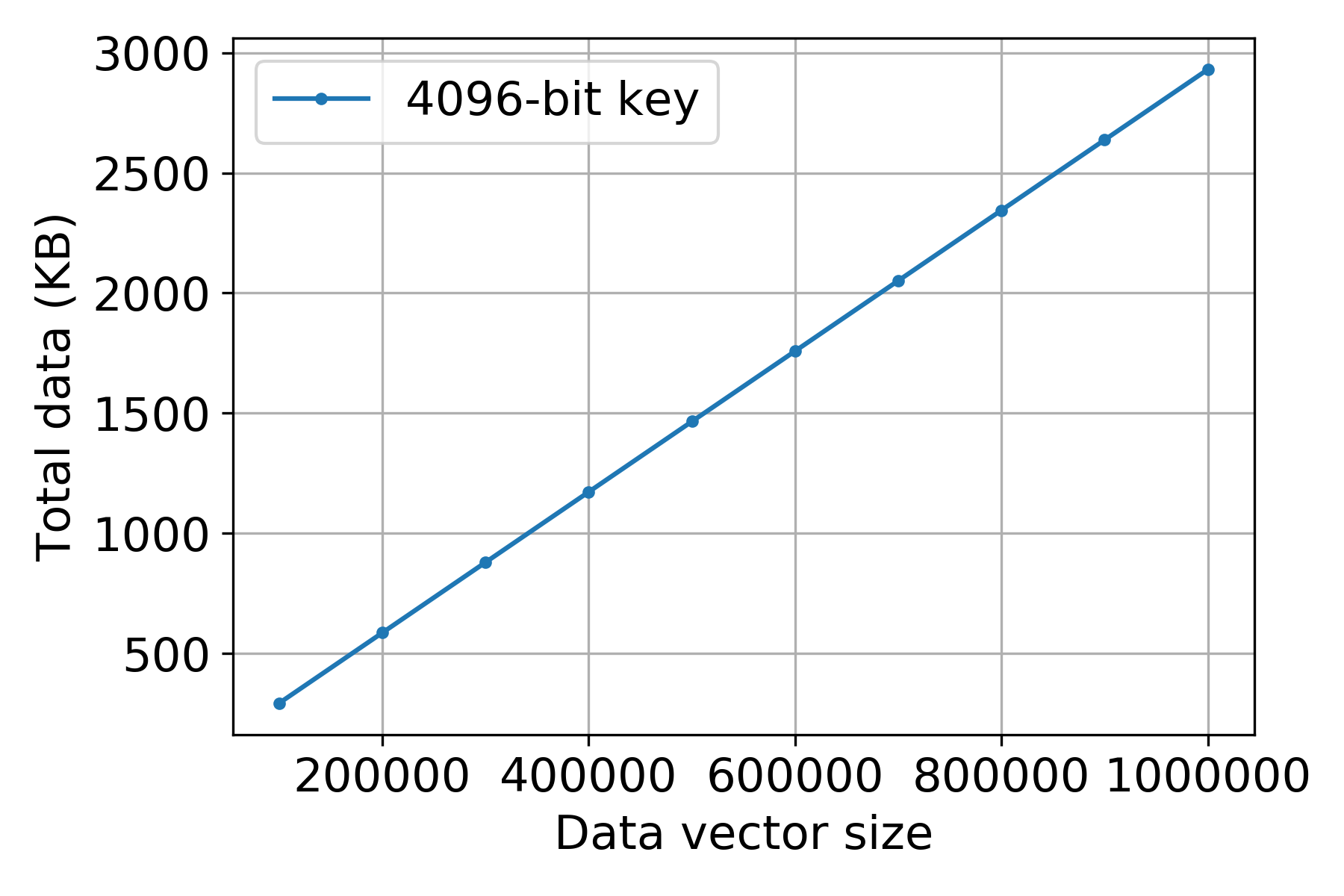}
%     \caption{Total data transfer per user. The amount of transmitted data at 1024-bit, 2048-bit, and 3072-bit keys are nearly identical to 4096-bit key}
%     \label{fig:size}
% \end{figure}

\begin{figure}


    \centering
% 	\hspace{-20px}
	\subfloat[Total bandwidth expansion factor per user, as compared to sending the raw data vector to the server. Different lines represent different values of key sizes $k$]{
	    \includegraphics[width=0.46\linewidth]{factor.png}
    	\label{fig:factor}
	}%
	\hfill
	\subfloat[Total data transfer per user. The amount of transmitted data at 1024-bit, 2048-bit, and 3072-bit keys are nearly identical to 4096-bit key]{
	    \includegraphics[width=0.46\linewidth]{size.png}
	    \label{fig:size}
    }
    \caption{Bandwidth consumption per user}
    % \label{fig:normal}
\end{figure}

\noindent\textbf{Key generation.} We first discuss the overhead of distributed key generation. Note that this process can be included during the one-time setup phase of the FL system. An implementation of this phase can be adopted from \cite{das2022practical}. It is shown that with  128 users, the running time for key generation per user is less than 50 seconds, and each user consumes about 15MB of network bandwidth. Therefore, the key generation phase imposes negligible overhead on the system.

% In our experiments, we adopt an implementation from \cite{das2022practical} and perform some analysis. With 

\noindent\textbf{Users.} Wall-clock running times for users is plotted in Fig. \ref{fig:time}. We measure the running time with key sizes of 1024, 2048, and 4096, respectively. As can be seen, under all settings, the result conforms with the running time complexity analysis. Also, note that the computation of each user is independent of each other, hence, the number of users does not affect the user's running time. By polynomial packing the data vector and optimizing the encryption process, the user can encrypt the entire data vector within seconds.

Fig. \ref{fig:factor} shows the bandwidth expansion factor of the model encryption per user as we increase the data vector size. It can be observed that the factor approaches 1 as the model size increases, thereby conforming to the analysis of reducing the ciphertext size. Moreover, the number of users also does not affect the bandwidth consumption. This result implies that our secure aggregation scheme induces minimal overhead as compared to sending the raw data vector, and can scale to a large number of users. 

Fig. \ref{fig:size} illustrates the total data transfer of each user. The total data increases linearly with the data vector size, which follows the communication complexity $O(m)$. Furthermore, as shown in Fig. \ref{fig:factor}, the amount of transmitted data is very close to the raw data size under all key sizes, therefore, the total data transfer is approximately the vector size multiplied by 3 bytes. 

% \begin{figure}
%     \centering
% % 	\hspace{-20px}
% 	\subfloat[Wall-clock running time of the server at different data vector sizes, as the number of users increases.]{
% 	    \includegraphics[width=0.45\linewidth]{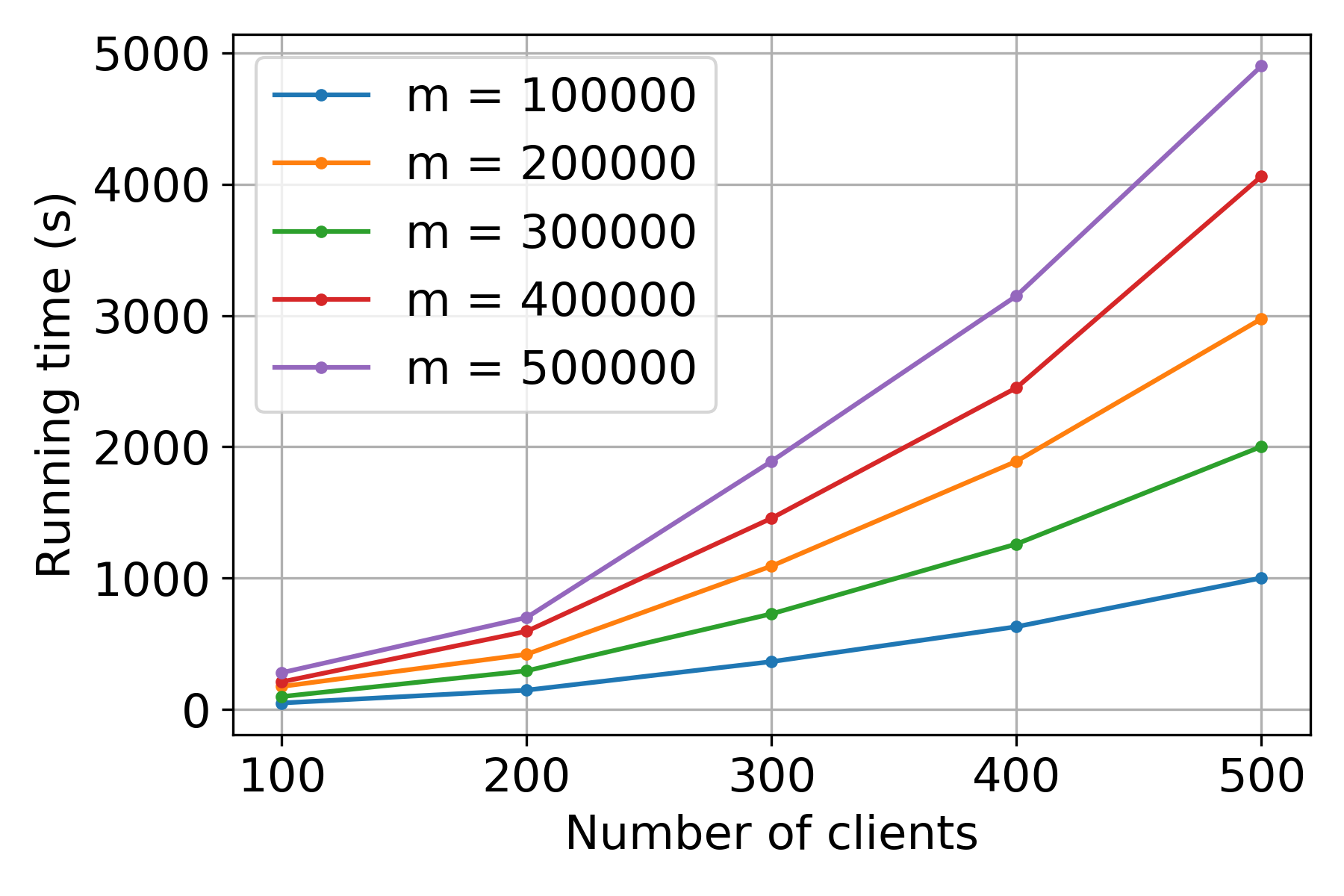}
%     	\label{fig:servertime}
% 	}%
% 	\hfill
% 	\subfloat[Total bandwidth consumption of the server at various number of users, as the data vector size increase.]{
% 	    \includegraphics[width=0.45\linewidth]{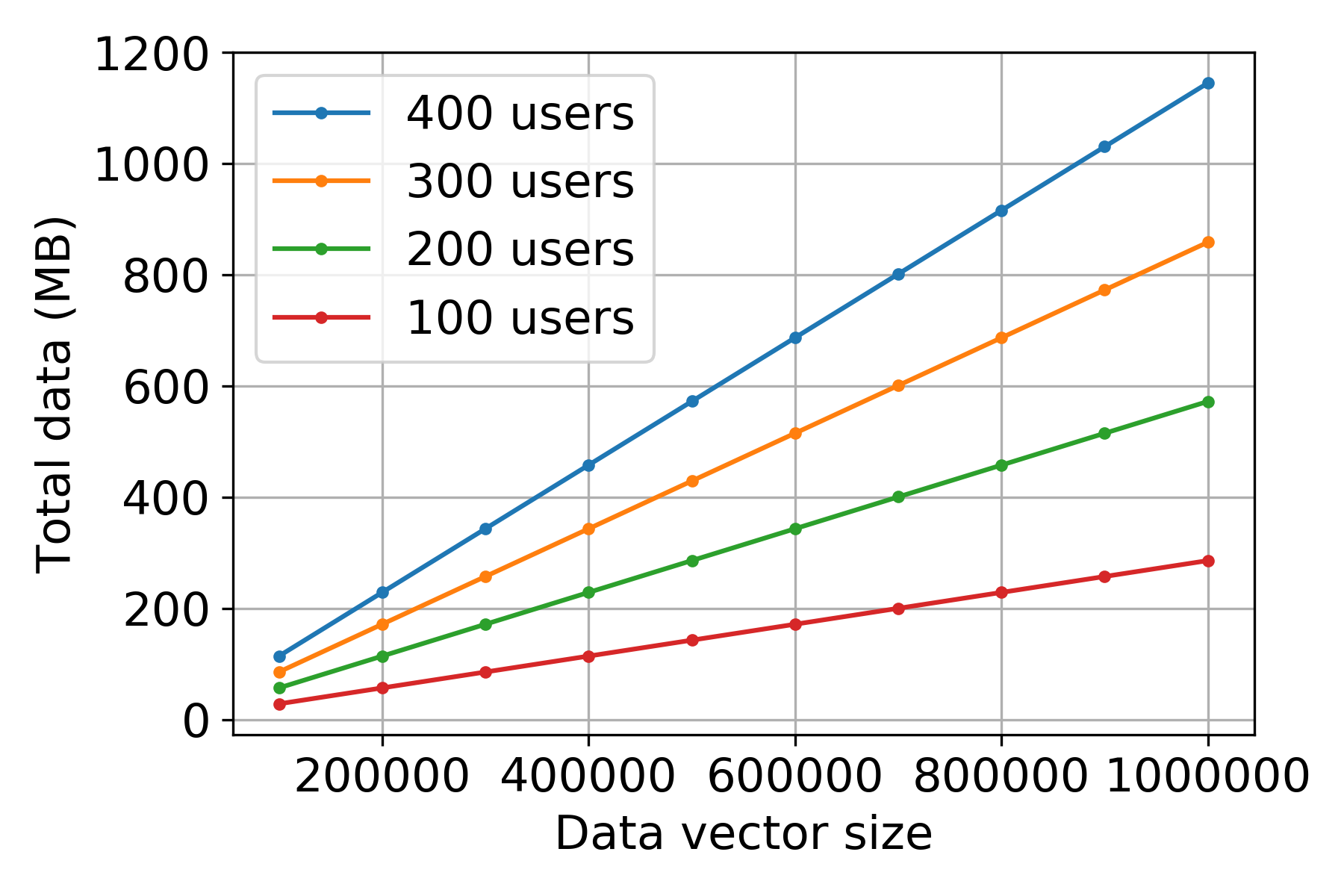}
% 	    \label{fig:serversize}
%     }
%     \caption{Running time and bandwidth consumption of the server. The key size is fixed to 4096 bits}
%     % \label{fig:normal}
% \end{figure}

% \begin{figure}
%     \centering
%     \includegraphics[width=0.7\linewidth]{servertime1.png}
%     \caption{ }
%     \label{fig:servertime}
% \end{figure}

\begin{figure}


    \centering
% 	\hspace{-20px}
	\subfloat[Wall-clock running time of the server at different data vector sizes, as the number of users increases.]{
	    \includegraphics[width=0.46\linewidth]{servertime1.png}
    	\label{fig:servertime}
	}%
	\hfill
	\subfloat[Total bandwidth consumption of the server at various numbers of users, as the data vector size increase.]{
	    \includegraphics[width=0.46\linewidth]{serversize.png}
	    \label{fig:serversize}
    }
    \caption{Running time and bandwidth consumption of the server. The key size is fixed to 4096 bits}
    % \label{fig:normal}
\end{figure}

% \begin{figure}
%     \centering
%     \includegraphics[width=0.7\linewidth]{servertime1.png}
%     \caption{Wall-clock running time of the server at different data vector sizes, as the number of users increases. The key size is fixed to 4096 bits}
%     \label{fig:servertime}
% \end{figure}

\noindent\textbf{Server.} On the server's side, Fig. \ref{fig:servertime} shows the running time on the server as we increase the number of users. As can be seen, with 100 users, the server only needs less than 6 minutes to perform decryption, and the result conforms to the running time complexity of $O(m|\calU|^2)$. Note that our testing machine (Intel Core i7, 16GB RAM) has much less computation power than state-of-the-art servers.
%, which is an acceptable running time for federated machine learning in distributed systems. 
 Furthermore, Fig. \ref{fig:serversize} gives the amount of data received by the server. Since we already optimized the size of data sent by users, the bandwidth consumption of the server is also optimized, which follows the communication complexity $O(m|\calU|)$.

% \begin{figure}
%     \centering
%     \includegraphics[width=0.7\linewidth]{serversize.png}
%     \caption{Total bandwidth consumption of the server at various number of users, as the data vector size increase. The key size is fixed to 4096 bits}
%     \label{fig:serversize}
% \end{figure}

% \begin{figure}
%     \centering
%     \includegraphics[width=0.7\linewidth]{serversize.png}
%     \caption{Total bandwidth consumption of the server at various number of users, as the data vector size increase. The key size is fixed to 4096 bits}
%     \label{fig:serversize}
% \end{figure}

% \begin{figure}
%     \centering
%     \includegraphics[width=0.7\linewidth]{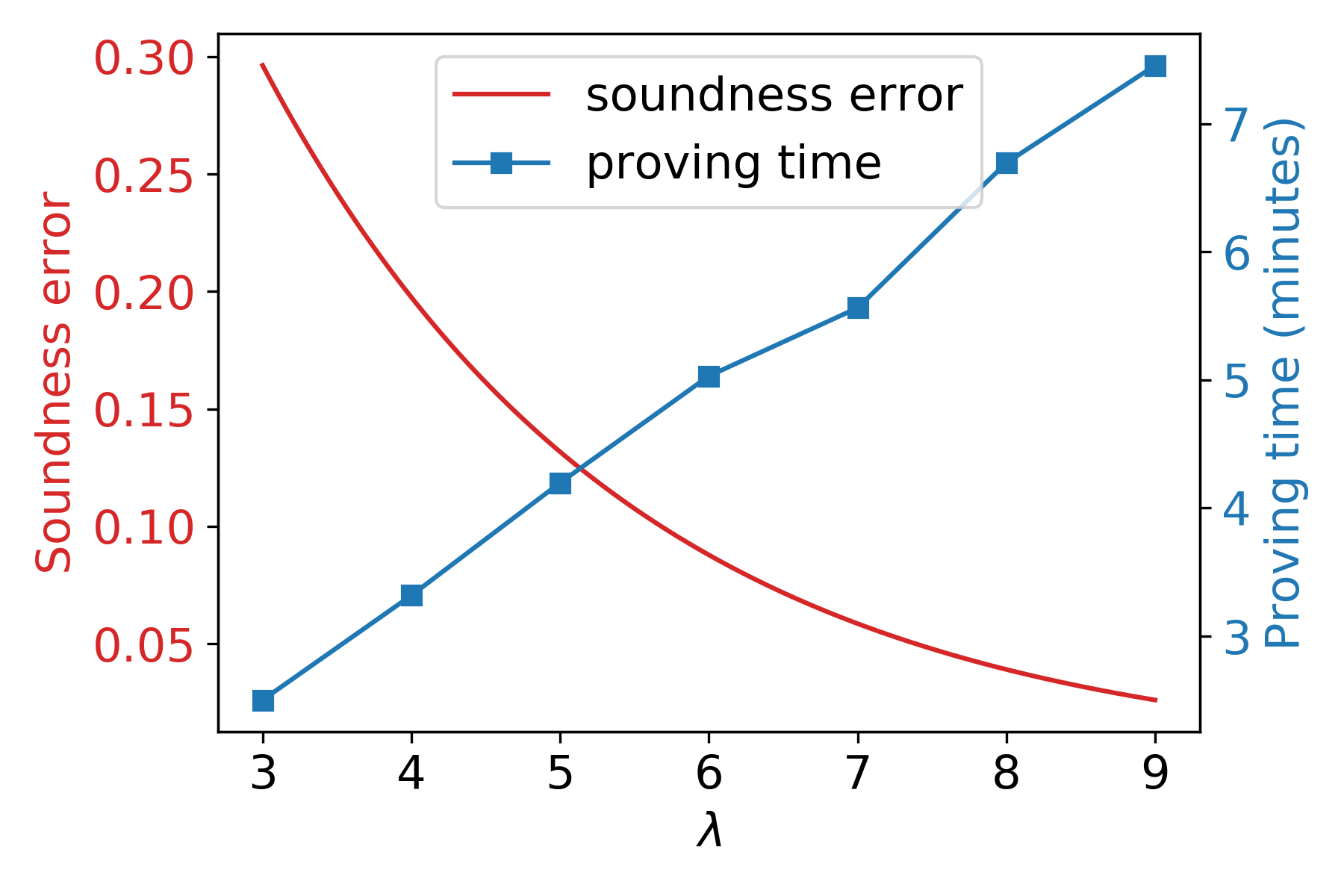}
%     \caption{Wall-clock running time for generating a zero-knowledge proof per user and its soundness error, as $\lambda$ increases}
%     \label{fig:sound}
% \end{figure}

\begin{figure}
    \centering
% 	\hspace{-20px}
	\subfloat[User generating ZKP]{
	    \includegraphics[width=0.46\linewidth]{sound.png}
    	\label{fig:sound}
	}%
	\hfill
	\subfloat[Server validating ZKPs of 100 users]{
	    \includegraphics[width=0.46\linewidth]{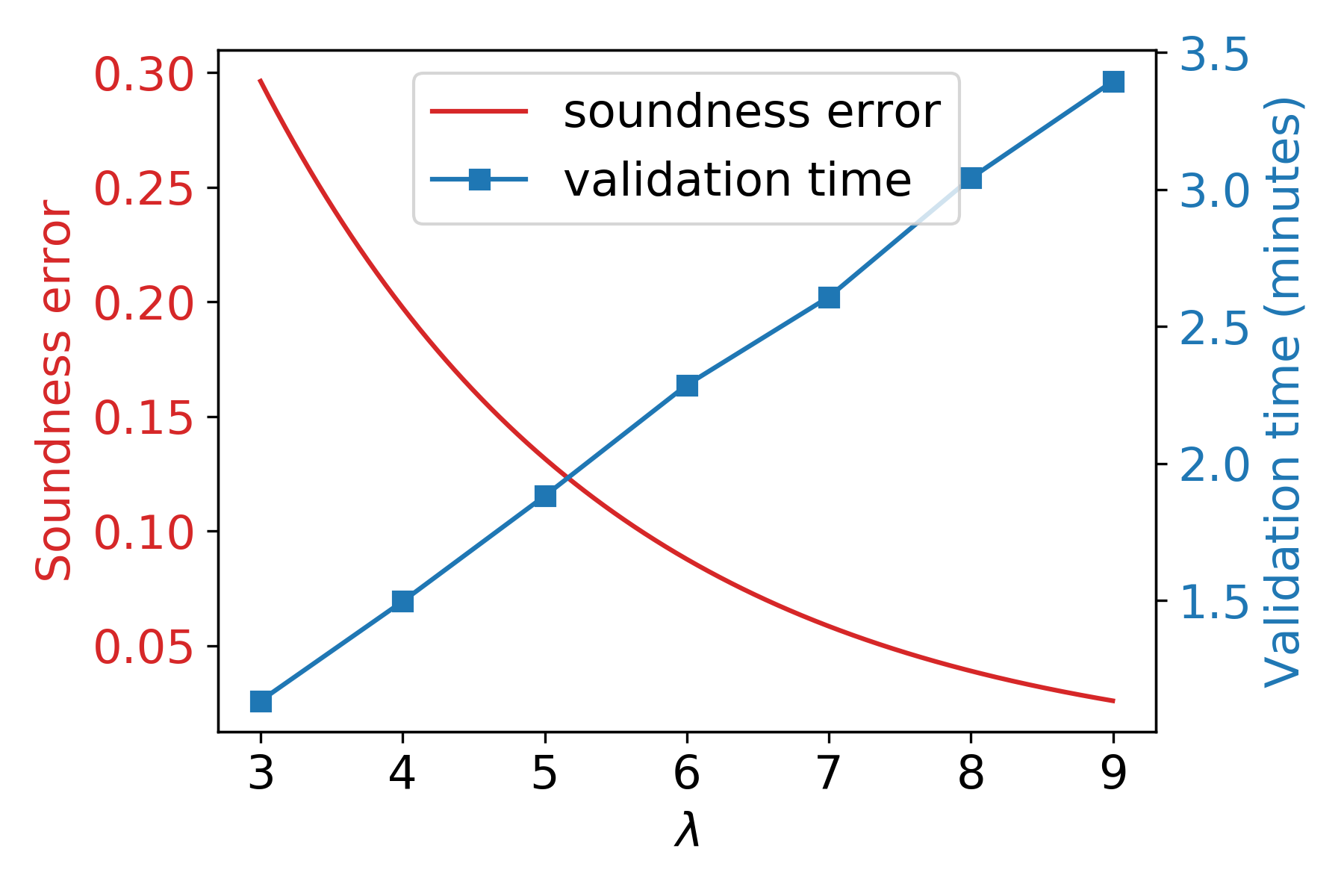}
	    \label{fig:serverval}
    }
    \caption{Wall-clock running time of the zero-knowledge proof protocol on both users and server, as $\lambda$ increases}
    \label{fig:normal}
\end{figure}

\subsubsection{Zero-knowledge proof} 
For the implementation of the zero-knowledge backdoor detection, we train a neural network to use as an input. We use the MNIST dataset and train a 3-layer DNN model in the Tensorflow framework using ReLU activation at the hidden layers and softmax at the output layer. The training batch size is 128. Then, we implement an MPC protocol that is run between 3 emulated parties based on \cite{wagh2019securenn}. The model's accuracy is 93.4\% on the test set.

\noindent\textbf{Users.} To benchmark the performance of the ZKP protocol, we measure the proving and validation time against the soundness error. As shown before, the soundness error is $\frac{2}{3}^{\lambda}$ where $\lambda$ is the security parameter. Fig. \ref{fig:sound} illustrates the proving time to generate a ZKP by each user. We can see that as $\lambda$ increases, the soundness error decreases, nonetheless, the proving time increases linearly. This is a trade-off since the protocol runs slower if we need to obtain a better soundness error. It only takes more than 5 minutes to achieve a soundness error of 0.06, and roughly 7 minutes to achieve 0.03.

% \begin{figure}
%     \centering
%     \includegraphics[width=0.7\linewidth]{serverval.png}
%     \caption{Wall-clock running time for validating zero-knowledge proofs of 100 users, as $\lambda$ increases}
%     \label{fig:serverval}
% \end{figure}

\noindent\textbf{Server.} Fig. \ref{fig:serverval} shows the validation time of the ZKP protocol when the server validates the proofs of 100 users. Similar to the proving time, there is a trade-off between the running time and the soundness error. For a soundness error of 0.03, we need about 3.5 minutes to validate 100 users, which is suitable for most FL use cases \cite{bonawitz2019towards}.

To ensure that the ZKP has minimal impact on the model performance, we have also evaluated our framework using CIFAR-10 \cite{krizhevsky2009learning} and Imagenette \cite{howard2020imagenette}. After training the models on these datasets under our framework with FL, we attain an accuracy of 86.6\% and 92.5\% on CIFAR-10 and Imagenette, respectively. This shows that our framework imposes negligible trade-offs in terms of model accuracy.

%\mt{For the whole paper, I don't see this analysis: Robust to users dropping out. Do we need this in a deisgn goal?} \truc{Truc: Since our protocol tolerates the user drop-outs by design, they not affect the experiment results (the result is still the same with user drop-out). So, I add a paragraph in Section IV.B to clarify how our protocol achieves this design goal.}

\section{Related work} \label{sec:rel}
\noindent\textbf{Secure aggregation in FL.} Leveraging secret sharing and random masking, Bonawitz et al. \cite{bonawitz2017practical} propose a secure aggregation protocol and utilize it to aggregate local models from users. However, the protocol relies on users honestly following the secret sharing scheme. Consequently, although the privacy guarantee for honest users still retains, a single malicious user can arbitrarily deviate from the secret sharing protocol and make the server fail to reconstruct the global model. Furthermore, there is no mechanism to identify the attacker if such an attack occurs.

In \cite{aono2017privacy} and \cite{zhang2020batchcrypt}, the decryption key is distributed to the users, and the server uses homomorphic encryption to blindly aggregate the model updates. However, the authors assume that there is no collusion between the server and users so that the server cannot learn the decryption key. Hence, the system does not work under our security model where users can be malicious and collude with the server.

Additionally, there have been studies on how to use generic secure MPC based on secret sharing to securely compute any function among multiple parties \cite{damgaard2012multiparty,ben2019completeness,lindell2015efficient}. However, with these protocols, each party has to send a secret share of its whole data vector to a subset of the other parties. In order to make the protocols robust, the size of this subset of users should be considerably large. Since each secret share has the same size as the entire data vector's, these approaches are not practical in federated learning in which we need to deal with high-dimensional vectors.

On the other hand, differential privacy (DP) is considered as a prominent privacy tool with a formal guarantee on the privacy leakage \cite{dwork2014algorithmic,nguyen2023xrand}. However, DP only aims to protect membership privacy and does not conceal the local models from the server. Previous work has shown that the users in FL are still susceptible to other privacy attacks, such as data reconstruction, even when DP is used \cite{boenisch2021curious,fowl2021robbing}. Furthermore, DP comes with a trade-off that reduces the model performance. It has been suggested that using DP makes it impossible to train a good model for datasets like CIFAR-10 or ImageNet with reasonable accuracy \cite{boenisch2021curious,tramer2020differentially}. Our proposed protocol fully hides the local models from the server, disabling any attacks that rely on inspecting the local models, while maintaining the performance of the models.

% These studies can be categorized as follows: using Yao’s garbled circuits, and using secret sharing. The protocols based on Yao’s garbled circuits are better suited to 2- or 3-party secure computation and do not directly extend to hundreds of users. 

% On the other hand, MPC protocols based on secret sharing, however, can extend to a higher number of users. Moreover, these protocols have been more computationally efficient, and can be made robust against dropouts. By using ORAM construction to help localize the computation and prevent broadcasts, Boyle et al. \cite{boyle2015large} proposed a generic MPC that can work with hundred of users.

\noindent\textbf{Zero-knowledge proof for detecting poisoned models.} Although there have been many studies on developing a ZKP protocol based on MPC, they have not been widely used in the machine learning context. The IKOS protocol proposed by Ishai et al. \cite{ishai2007zero} is the first work that leverages secure multi-party computation protocols to devise a zero-knowledge argument. Giacomelli et al. \cite{giacomelli2016zkboo} later refine this approach and construct ZKBoo, a zero-knowledge argument system for Boolean circuits using collision-resistant hashes that does not require a trusted setup. Our proposed ZKP protocol is inspired by the IKOS protocol with repetitions to reduce the soundness error to $\frac{2}{3}^{\lambda}$. We also demonstrate how such protocol can be used in the context of machine learning, especially for attesting non-poisoned models while maintaining privacy guarantees.

Another line of research in ZKP focuses on zkSNARK implementations \cite{bitansky2013succinct} that have been used in practical applications such as ZCash \cite{sasson2014zerocash}. However, these systems depend on cryptographic assumptions that are not standard, and have large overhead in terms of memory consumption and computation cost, thereby limiting the statement sizes that they can manage. Therefore, it remains a critical challenge whether zkSNARK could be used in machine learning where the circuit size would be sufficiently large.

% Instead of using ZKP to attest non-poisoned models, another approach is to use secure hardware such as Intel Software Guard Extensions \cite{10.1145/2487726.2488368}. Though our approach using ZKP has higher performance overheads, nevertheless, we do not rely on the security guarantees of such hardware. Indeed, these secure hardware implementations  depend on secure remote attestation capabilities, which have recently been broken for systems like SGX \cite{van2019breaking}.

\noindent\textbf{Defense mechanisms against poisoning attacks.} There have been multiple research studies on defending against poisoning attacks. Liu et al. propose to remove backdoors by pruning redundant neurons \cite{liu2018fine}. On the other hand, Bagdasaryan et al. \cite{bagdasaryan2020backdoor} devise several anomaly detection methods to filter out attacks. More recently proposed defense mechanisms \cite{liu2019abs,wang2019neural} detect backdoors by finding differences between normal and infected label(s).

Our work leverages such a defense mechanism to construct a ZKP protocol for FL in which the users can run the defense locally on their model updates, and attest its output to the server, without revealing any information about the local models. Different defense mechanisms may have different constructions of the ZKP protocols, nevertheless, they must all abide by the properties specified in our framework (Section \ref{sec:frame}). As shown in Section \ref{sec:zkp}, we construct a ZKP protocol based on the defense proposed by Liu et al. \cite{liu2018fine}.

\section{Conclusion} \label{sec:con}
In this paper, we have proposed a secure framework for federated learning. Unlike existing research, we integrate both secure aggregation and defense mechanisms against poisoning attacks under the same threat model that maintains their respective security and privacy guarantees. We have also proposed a secure aggregation protocol that can maintain liveness and privacy for model updates against malicious users. Furthermore, we have designed a ZKP protocol for users to attest non-backdoored models without revealing any information about their models. Our framework combines these two protocols and shows that the server can detect backdoors while preserving the privacy of the model updates. The privacy guarantees for the users' models have been theoretically proven. Moreover, we have presented an analysis of the computation and communication cost and provided some benchmarks regarding its runtime and bandwidth consumption

\section*{Acknowledgements}
This material is based upon work supported by the National Science Foundation under grants CNS-1935923 and CNS-2140477.

\bibliographystyle{plain}
% argument is your BibTeX string definitions and bibliography database(s)
\bibliography{bibliography}

\clearpage
\appendices

\section{Proof for Theorem 1} \label{proof:1}
We note that the simulator can produce a perfect simulation as follows: it runs the malicious users on their true inputs, and all other users on random bitstrings, and then outputs the simulated view of the users in ${Adv}$. This is because the combined view of the users in ${Adv}$ does not depend on the inputs of the users who are not in ${Adv}$. In particular, from the User aggregation component, we obtain a PPT simulator that can simulate the view of honest users. Next, from the Zero-knowledge proof component, we can build a simulator upon dummy input (since it does not have access to the prover) such that the output of $SIM$ is computationally indistinguishable from the output $Adv$.

\section{Proof for Theorem 3} \label{proof:3}
By pruning the neuron with the lowest average activation in each iteration, the defense mechanism, in fact, comprises 3 phases of pruning neurons \cite{liu2018fine}:

\begin{enumerate}
    \item In the first phase, the pruned neurons do not affect the model accuracy or the attack success rate of the backdoor.
    \item In the second phase, it prunes neurons that are activated by the backdoor but not by clean inputs. At this time, the success rate of the backdoor attack is reduced while the model accuracy is maintained.
    \item  The third phase prunes neurons that are activated by clean inputs, thereby reducing the model accuracy.
\end{enumerate}

Hence, as we are pruning the neuron with minimum average activation in Algorithm \ref{algo:backdoor}, the accuracy only drops if the backdoored neurons no longer exist according to the pruning defense.

\section{Proof for Theorem 4} \label{proof:4}

\noindent \textbf{Completeness:} Since $x = x_1 + x_2 + x_3$ and $\Pi_f$ are perfectly correct, the views $v_1, v_2, v_3$ always output the commitment of $x$. As these views are generated by an honest execution of the protocol, they remain consistent with one another.

\noindent \textbf{Soundness:} (if $x$ is backdoored then the verifier accepts the proof with probability at most $\epsilon$ where $\epsilon$ is the soundness error): First, since the commitments generated in step 5 are included in the input query to the random oracle, that means the prover does not know $e_1$ and $e_2$ prior to step 6, hence it does not know which views are to be revealed before committing (if it does, it would create 2 fake views to always pass all the checks). Additionally, since the commitment scheme is computationally binding, the views $v_{e_1}$ and $v_{e_2}$ must be the same as the ones in step 4, or else they would be rejected by the first check.

Since $x = x_1 + x_2 + x_3$ and $\Pi_f$ is perfectly correct, suppose the prover is honest, an honest execution will produce the views $v_1, v_2, v_3$ whose output is $\bot$. Therefore, taking into account the 3 views committed to by the prover in Step 4, either (1) in all views the output is $\bot$ (the prover is honest), or (2) there exists two views that are inconsistent (the prover is dishonest) by the above-mentioned property. 

In case (1), the proof is rejected by the second condition with probability 1. In case (2), the verifier rejects with probability at least $1/{3 \choose 2}$, which is the probability of selecting an inconsistent pair of views. That means the soundness error is $1 - 1/{3 \choose 2} = \frac{2}{3}$.

% The soundness error is not good because with $n=3$, the verifier already accepts an incorrect proof with probability at most $2/3$. The question is how can we reduce this soundness error. One way to do this is through replication (note that this replication doesn't make the scheme interactive).

% Let $\lambda \in \mathbb{N}$ be the security parameter of the protocol, we modify the scheme as follows:

% For $k = 1,..., n^2\lambda$, repeat step 1-5 with different random values $x_1, ..., x_n$. At this time, the proof will consist of $\Pi = \{(\{e_j\}_{j\in \{1,2\}},\{v_{e_j}\}_{j\in \{1,2\}},\{cm_i\}_{i\in [n]})\}_{k\in [n^2\lambda]}$. The verifier only accepts if $\pi_k = \{(\{e_j\}_{j\in \{1,2\}},\{v_{e_j}\}_{j\in \{1,2\}},\{cm_i\}_{i\in [n]})\}_{k}$ pass the checks for all $k$.

% Since each replication is done independently, the completeness and zero-knowledge properties follow the 1st version. 

As we are running $\lambda$ iterations, at each replication, the verifier accepts the proof independently with probability at most $\frac{2}{3}$. Hence, the probability that the verifier accepts all proof is at most $\frac{2}{3}^{\lambda}$

% At $\lambda = 1$, the $e^{-\lambda}$ is already less than $1/2$, so this is an acceptable soundness error.

\textbf{Zero-knowledge:} The verifier only has access to a subset of 2 views, since we are using a 3-party 2-private MPC protocol, these 2 views do not reveal anything about the private input $x$ (the privacy properties of the MPC protocol). Thus, the verifier learns nothing about $x$.

In fact, we could also construct a perfect simulation for the verifier where the simulator is the same as the one described in \cite{ishai2007zero}, which also uses a 2-private MPC protocol.

\end{document}